\newcommand{\abs}[1]{\lvert#1\rvert}
\newtheorem{problem}{Problem}
\begin{document}
\newcommand{\yd}[1]{\textcolor{red}{Yushun: #1}}
\newcommand{\sli}[1]{{\leavevmode\color{olive}{\emph{[Shibo: #1]}}}}
\newcommand{\shen}[1]{\textcolor{brown}{Bolin: #1}}

\title{ATOM: A Framework of Detecting Query-Based Model Extraction Attacks for Graph Neural Networks}

\author{Zhan Cheng}
\affiliation{%
  \institution{University of Wisconsin, Madison}
  \city{Madison}
  \state{Wisconsin}
  \country{USA}
}
\email{zcheng256@wisc.edu}

\author{Bolin Shen}
\affiliation{%
  \institution{Florida State University}
  \city{Tallahassee}
  \state{Florida}
  \country{USA}}
\email{blshen@fsu.edu}

\author{Tianming Sha}
\affiliation{%
  \institution{Arizona State University}
  \city{Tempe}
  \state{Arizona}
  \country{USA}
}
\email{stianmin@asu.edu}

\author{Yuan Gao}
\affiliation{%
 \institution{University of Wisconsin, Madison}
 \city{Madison}
 \state{Wisconsin}
 \country{USA}}
\email{ygao355@wisc.edu}

\author{Shibo Li}
\affiliation{%
  \institution{Florida State University}
  \city{Tallahassee}
  \state{Florida}
  \country{USA}}
\email{sl24bp@fsu.edu}

\author{Yushun Dong}
\affiliation{%
  \institution{Florida State University}
  \city{Tallahassee}
  \state{Florida}
  \country{USA}}
\email{yushun.dong@fsu.edu}

\renewcommand{\shortauthors}{Zhan Cheng, Bolin Shen, Sha Tianming, Yuan Gao, Shibo Li, \& Yushun Dong}

\begin{abstract}
Graph Neural Networks (GNNs) have gained traction in Graph-based Machine Learning as a Service (GMLaaS) platforms, yet they remain vulnerable to graph-based model extraction attacks (MEAs), where adversaries reconstruct surrogate models by querying the victim model. Existing defense mechanisms, such as watermarking and fingerprinting, suffer from poor real-time performance, susceptibility to evasion, or reliance on post-attack verification, making them inadequate for handling the dynamic characteristics of graph-based MEA variants. To address these limitations, we propose ATOM, a novel real-time MEA detection framework tailored for GNNs. ATOM integrates sequential modeling and reinforcement learning to dynamically detect evolving attack patterns, while leveraging $k$-core embedding to capture the structural properties, enhancing detection precision. Furthermore, we provide theoretical analysis to characterize query behaviors and optimize detection strategies. Extensive experiments on multiple real-world datasets demonstrate that ATOM outperforms existing approaches in detection performance, maintaining stable across different time steps, thereby offering a more effective defense mechanism for GMLaaS environments. Our source code is available at \href{https://github.com/LabRAI/ATOM}{https://github.com/LabRAI/ATOM}.
\end{abstract}

\keywords{Graph Neural Networks, Model Extraction Attacks, Machine Learning as a Service, Security}
\maketitle

\section{Introduction}
Graph Neural Networks (GNNs) \cite{scarselli2008graph,kipf2016semi} have been widely studied for modeling graph-structured data, where nodes represent entities and edges capture their relationships. Accordingly, GNNs have also demonstrated promising performance in various real-world applications, such as financial fraud detection~\cite{mao2022using,liu2021pick}, biomolecular interaction analysis \cite{bongini2022biognn,reau2023deeprank}, and personalized item recommendations \cite{gao2022graph,chang2023kgtn}. Despite its exceptional success, training GNNs has become increasingly costly due to the growing scale of both model and data. To democratize the access to powerful GNNs, Graph-based Machine Learning as a Service (GMLaaS) has emerged as a popular paradigm, which enables the model owner to provide easily accessible APIs for customers to use without disclosing the underlying GNN model.
%
%
This facilitates the broader adoption of GNNs in various domains such as e-commerce~\cite{weng2022mlaas}, healthcare~\cite{eldahshan2024optimized}, and scientific research~\cite{correia2024deepmol,zhang2021graph,shlomi2020graph}.
However, despite these advantages, GMLaaS platforms face significant security risks from model extraction attacks (MEAs) \cite{tramer2016stealing,liang2024model}. These attacks allow adversaries to query a deployed API and systematically reconstruct a surrogate model that closely mimics the target model's behavior. Recent research \cite{GUAN2024112144} has demonstrated that MEAs pose a severe threat to GMLaaS platforms, which endanger both GMLaaS providers and users, leading to financial losses and potential downstream security threats. In the financial domain, for instance, service providers can deploy GMLaaS solutions to enhance credit card fraud detection \cite{liu2021graph, xiang2023semi}. However, graph-based MEAs would enable adversaries to replicate fraud detection models, extract decision boundaries, and ultimately bypass fraud detection systems, thereby increasing the risk of large-scale financial crimes. Therefore, graph-based MEAs have emerged as a pressing security threat to GMLaaS platforms, highlighting the urgent need for robust defense mechanisms to mitigate these risks.

To counteract MEAs on GMLaaS, several mainstream defense strategies have been developed. A common defense strategy is watermarking, where model owners embed specially designed input-output patterns (as watermarks) into GNNs for ownership verification \cite{jia2021entangled, chakraborty2022dynamarks, lederer2023identifying}. Specifically, given the specially designed input, if a certain GNN model produces the same patterns in its corresponding output, it is then implied that this GNN was obtained via MEA. While effective, watermarking may degrade model accuracy \cite{lee2019defending} and still leave the model vulnerable to attackers due to its passive nature. Another related approach is fingerprinting \cite{10646643, wu2024securing}, which aims to identify stolen models by comparing their outputs to a reference model \cite{lukas2019deep, maini2021dataset}. However, both fingerprinting and watermarking are passive rather than active and can only take effect after a GNN model has been stolen. More critically, none of these methods provides proactive detection—especially in GMLaaS, where queries are sequential, adaptive, and structurally dependent. This raises a crucial question: How can we detect MEAs on GNNs proactively, rather than merely responding after an attack has already occurred? Although some DNN-based detection methods \cite{pal2021stateful, liu2022seinspect, tang2024modelguard} could be adapted for GNNs, they often fail to capture the intricate relationships between nodes in graph-structured queries. As a result, attackers can bypass detection by leveraging node relationships and evolving their query strategies. In summary, while current defenses offer some level of post-attack verification, they lack the proactive capabilities needed to detect MEAs—especially in the context of GMLaaS. This gap highlights an urgent need for novel detection approaches that can monitor the adaptive and sequential queries tailored for specific graph structures.

Despite the critical importance need of proactively detecting MEAs on GMLaaS, it is a non-trivial task and we mainly face three fundamental challenges. 
(1) \textbf{Sequential Relationship of Graph-based MEA Queries.} In the strategical querying process, an attacker typically craft each query based on the historical information of the previous output sequence. Accordingly, the evolving trajectories of queries in the input space encodes key information identifying whether the user is malicious or legitimate. However, most existing approaches rely on the hypothesis that all queries are visible for the model provider to conduct defense~\cite{juuti2019prada}, which thus makes it difficult to capture the query evolving patterns and flag potential attackers.
(2) \textbf{Dynamic Characteristics of Graph-based MEA Variants.} In GMLaaS environments, adversaries could dynamically refine their attack strategies by exploiting the structural flexibility of graph-based queries. Rather than simply replicating well-known attack signatures \cite{yu2020cloudleak, zhang2021seat}, they may adapt in real time, strategically avoiding high-risk nodes and targeting low-risk ones to evade detection. Thus, the second challenge is to design a detection framework that remains robust against evolving attack strategies. (3) \textbf{Necessity of considering multi-modal information.} Existing MEA detection methods, primarily designed for DNNs, often do not consider structural information. While effective in general cases, these methods may fail to capture the topological context of query-related nodes in GMLaaS. This is because graph-based queries involve both node attributes and topological information (e.g., multi-hop neighbors of a node). Thus, it is necessary to consider the information encoded in both modalities.

To address these challenges, we propose a novel framework, ATOM (\underline{A}ttacks de\underline{T}ector \underline{O}n G\underline{M}LaaS), for real-time detection of MEAs targeting GMLaaS environments. Specifically, to tackle the first challenge, we introduce a differential query feature encoding mechanism that analyzes changes in query features across consecutive interactions. This approach enables our framework to adapt dynamically to evolving attack behaviors by continuously monitoring and evaluating incoming queries in real-time. Next, to address the second challenge, we refine our detection strategy through a reinforcement learning approach with a normalization factor, i.e., the Proximal Policy Optimization (PPO). This allows our detection policy dynamically adjust to evolving query patterns and reveal how attackers refine their methods. We further provide a theoretical analysis of these refinements. Finally, to overcome the third challenge, we enhance each query with values reflecting a node’s structural importance, utilizing the $k$-core centrality. This enables the model to capture both local query traits and broader topological context, significantly improves its ability to distinguish between legitimate and malicious queries, even in sparsely connected scenarios. Our main contributions can be summarized as follows:
\begin{itemize}
    \item \textbf{Problem Formulation: }We provide a mathematical formulation of graph-based MEA detection in GMLaaS environments under the transductive setting, defining attack behaviors, detection objectives, and adversarial interactions.
    \item \textbf{Proposed Novel Framework: }To the best of our knowledge, ATOM is the first framework for proactive detection of graph-based MEAs in GMLaaS. Our empirical evaluations show that it outperforms existing methods adapted to this scenario.
    \item \textbf{Theoretical Analysis: }We conduct theoretical analysis on the query representation and derive formal bounds, offering a principled way to evaluate detection performance and optimize feature selection for adversarial query detection.
\end{itemize}


\section{Preliminaries}
In this section, we introduce the foundational concepts for detecting graph-based MEAs in a GMLaaS environment. The detection objective is to analyze user-submitted queries and determine whether the user is an attacker attempting to extract the deployed model. Our discussion covers the GMLaaS query-response framework, the GNN model, the objectives of both attackers and defenders and the formulation of the problem.

\subsection{Graph-based Machine Learning as a Service}

\textbf{Node-level Prediction Task.} GMLaaS systems provide a query-based interface that allows users to access pre-trained machine learning models hosted on cloud platforms~\cite{correia2024deepmol}. Node-level prediction tasks typically operate under two primary learning paradigms: the transductive setting or the inductive setting \cite{wu2024securing}. In this work, we focus on the transductive setting, where the training graph used to train the GNN model is identical to the inference graph used for serving predictions and remains unchanged throughout the service. The GMLaaS system enables users to query node-level predictions while granting access to partial graph information.

\noindent
\textbf{GMLaaS Query-Response Framework.}
In the GMLaaS setting, each user $u_{i}\in\mathcal{U}$ submits a query $q_{i,t}$ targeting a specific node $v_{i,t}\in \mathcal{V}$ within a graph $\mathcal{G}=(\mathcal{V},\mathcal{E},\boldsymbol{X})$, where $\mathcal{V}$ represents the set of nodes, $\mathcal{E}$ denotes the set of edges, and $\boldsymbol{X}\in \mathbb{R}^{|V| \times d}$ represents the node feature matrix. Upon receiving the query, the GMLaaS system provides a predicted label, denoted as $y_{i,t}=\mathcal{M}(v_{i,t})$, where $y_{i,t}\in\mathcal{C}$, and $\mathcal{C}$ is the set of possible class labels.

Additionally, user $u_{i}$ can access the one-hop subgraph $\mathcal{G}_{i,t}=(\mathcal{V}_{i,t}, \mathcal{E}_{i,t}, \boldsymbol{X}_{i,t})$ centered around the queried node $q_{i,t}$. Here, $\mathcal{V}_{i,t}=\{v_{i,t}\} \cup \{w \mid (w,v_{i,t})\in \mathcal{E}\}$ includes $q_{i,t}$ and its one-hop neighbors, $\mathcal{E}_{i,t}=\{(w,v)\in \mathcal{E} \mid w,v\in \mathcal{V}_{i,t}\}$ contains the edges connecting nodes in $\mathcal{V}_{i,t}$, and $\boldsymbol{X}_{i,t}$ represents features of nodes in $\mathcal{V}_{i,t}$.

\noindent
\textbf{User and Query Sequences.} Consider a set of users denoted as $\mathcal{U}=\{u_{1}, u_{2}, \cdots, u_{M}\}$, where each user submits queries independently. The query history of a user $u_{i}\in\mathcal{U}$ is represented as a query sequence $\mathcal{Q}_{i}=\{q_{i,1}, q_{i,2}, \cdots, q_{i,T_{i}}\}$, where $T_{i}$ denotes the total number of queries made by $u_{i}$. Since queries arrive sequentially, $T_{i}$ also represents the total time steps of queries for user $u_{i}$.

\subsection{Graph Neural Networks (GNNs)}
Acting as the backbone of our proposed framework, a Graph Neural Network (GNN) $\mathcal{M}$ is trained on the static graph $\mathcal{G}$ for a specific downstream learning task. The basic operation of GNN between $l$-th and $(l+1)$-th layer can be formulated as follows:\begin{align}
\boldsymbol{h}_{v}^{(l+1)}=\sigma(\text{COMBINE}(\boldsymbol{h}_{v}^{(l)},f(\{\boldsymbol{h}_{u}^{(l)}:u\in\mathcal{N}(v)\}))),
\end{align}where $\boldsymbol{h}_{v}^{(l+1)}$ and $\boldsymbol{h}_{v}^{(l)}$ represent the embedding of node $v$ at $l$-th and $(l+1)$-th layer correspondingly. The node feature matrix $\boldsymbol{X}$ serves as the input to the GNN, where each node feature $\boldsymbol{x}_v$ initializes the corresponding hidden representation $\boldsymbol{h}_v^{(0)}$. Given the adjacency matrix $\boldsymbol{A}$, the neighbor set of node $v$ is denoted as $\mathcal{N}(v)$. The aggregation function $f(\cdot)$ gathers information from the neighbors of $v$, and the combining function $\text{COMBINE}(\cdot)$ integrates this information with the current hidden representation $\boldsymbol{h}_v^{(l)}$. An activation function $\sigma(\cdot)$ (e.g., ReLU), is applied to introduce non-linearity. Given the output of the last GNN layer by matrix $\boldsymbol{Z}\in\mathbb{R}^{n\times c}$, the prediction $\hat{\boldsymbol{Y}}$ of GNN can be written as $\text{softmax}(\boldsymbol{Z})\in\mathbb{R}^{n\times c}$ for node classification, and $\text{sigmoid}(\boldsymbol{Z}^T\boldsymbol{Z})\in\mathbb{R}^{n\times n}$ for link prediction \cite{kipf2016variational}.

\subsection{Adversary's Objective}
The adversary's goal is to reconstruct a surrogate model $\mathcal{M}'$ that closely approximates the behavior of the victim GNN model $\mathcal{M}$. This is achieved by systematically querying the GMLaaS system and collecting query-response pairs.

\noindent
\textbf{Adversary's Knowledge.} 
Following the attack taxonomy in \cite{wu2022model}, we assume that the attacker possesses partial knowledge of the graph’s structure and attributes. For instance, in a social network system, an attacker may access partial user connections and attributes through public profiles. Specifically, the attacker $u_{i}$ can access a subgraph $\mathcal{G}'\subset \mathcal{G}$. At time step $t$, $u_{i}$ submits query $q_{i,t}$ to access the one-hop subgraph $\mathcal{G}_{i,t}\subset \mathcal{G}'$ and the node features $\boldsymbol{X}_{i,t}$, where $\mathcal{G}_{i,t}=(\mathcal{V}_{i,t}, \mathcal{E}_{i,t}, \boldsymbol{X}_{i,t})$. The attacker can dynamically refine their query strategy based on information obtained from prior queries.

\noindent
\textbf{Extracted Model Training.}
The attacker trains the extracted model $\mathcal{M}'$ by minimizing the prediction error between the victim model $\mathcal{M}$ and $\mathcal{M}'$. This objective is formulated as:\begin{align}
\min_{\mathcal{M}'}\mathbb{E}_{v\in V}[\mathcal{L}(\mathcal{M}(v),\mathcal{M}'(v))],
\end{align}where $\mathcal{L}$ is the loss function measuring the prediction difference.

\subsection{Defender's Objective}
The defender’s goal is to detect adversarial users by classifying users based on their query sequences. This requires designing a detection function $Z$, which assigns a classification label: $d_{i,T}=Z(\mathcal{Q}_{i,T})$,
where $d_{i,T} \in \{0,1\}$, with $0$ representing a legitimate user and $1$ representing an attacker. Formally, the defender aims to learn an optimal function $Z^*$ that maximizes detection accuracy while minimizing false positives and false negatives:\begin{align}
Z^* = \arg\max_{Z} \mathbb{E} [\mathbb{I}(Z(\mathcal{Q}_{i,T}) = y_i)],
\end{align}where $y_i$ is the truth label of user $u_i$, $\mathbb{I}(\cdot)$ is the indicator function.

\subsection{Problem Statement}
\begin{problem} 
    \textbf{Graph-based MEA detection in GMLaaS environments under the transductive setting.} Let $\mathcal{G}=(\mathcal{V},\mathcal{E},\boldsymbol{X})$ be a static attributed graph. A GNN $\mathcal{M}$ is deployed on an GMLaaS platform under the transductive setting. Each user $u_i \in \mathcal{U}$ submits a query sequence $\mathcal{Q}_{i}=\{q_{i,t}\}_{t=1}^{T_i}$. Our goal is to design a detection function $Z$ that assigns a label $d_{i,T}$ to user $u_i$ based on their query sequence $\mathcal{Q}_{i,T}$ up to time step $T$, aiming to maximize the expected classification accuracy: $Z^* = \arg\max_{Z} \mathbb{E} [\mathbb{I}(Z(\mathcal{Q}_{i,T}) = y_i)]$, so that attackers and legitimate users can be accurately classified. 

\end{problem}

\begin{figure*}[ht]
\vskip 0.2in
    \begin{center}
\centerline{\includegraphics[width=\textwidth]{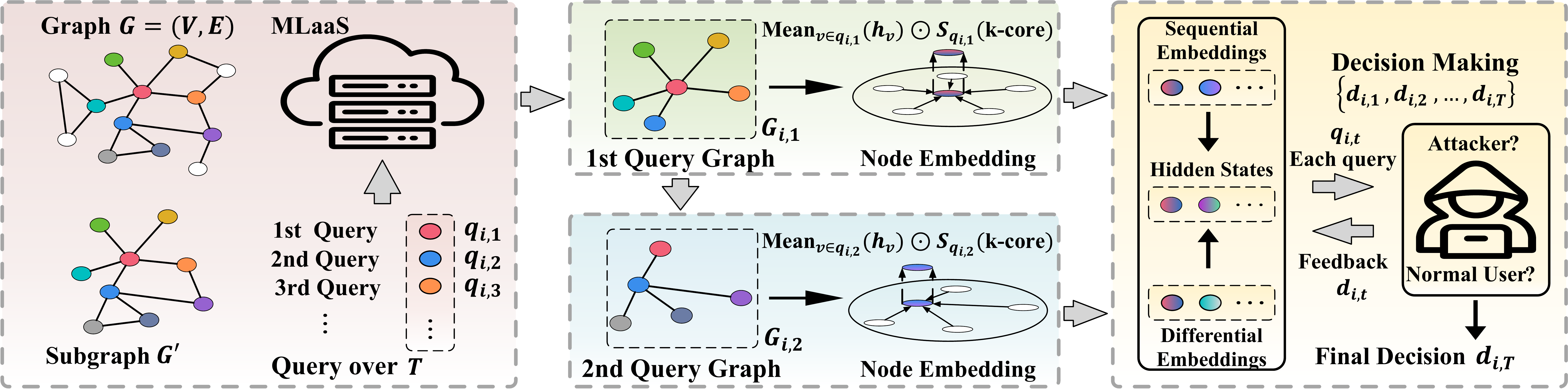}}
\caption{An illustration of the framework with the query behavior and the detection mechanism.}
\label{figure1_framwork}
\end{center}
\vskip -0.2in
\end{figure*}

\section{Methodology}
\subsection{Framework Overview}
An overview of the proposed framework is shown in Figure~\ref{figure1_framwork}. Specifically, it consists of two modules: (1) \textbf{Attack Simulation.} This module generates realistic model extraction attack sequences to serve as training data for the detection model. To achieve this, we integrate active learning techniques to mimic adversarial query behaviors under realistic GMLaaS constraints. (2) \textbf{Attack Detection.} This module consists of query embedding, a sequential network, and a reinforcement learning-based detection mechanism. It processes query sequences and classifies users as attackers or legitimate users based on their query behaviors.

\subsection{Attack Simulation}
A major challenge in constructing a reliable detection mechanism is obtaining high-fidelity training data that accurately represent real-world attacker behaviors. Instead of relying on passive observation, we proactively simulate realistic MEAs through the following steps.

\subsubsection{Active learning based Attacks}Since existing Graph-based MEAs are relatively limited, we adapt active learning (AL) \cite{settles2009active} to construct realistic attack query sequences, since both AL and MEAs share a common objective: maximizing knowledge extraction from a model while operating under strict query constraints. We simulate attacks using three representative algorithms:

\noindent
\textbf{AGE \cite{cai2017active}} (Active Exploration-Based Query Strategy). At each time step $T$, AGE selects a node $v_T$ based on a scoring function $S(v_T)$, which integrates: Information entropy (uncertainty), Information density (node importance), and Graph centrality (network influence). To align with GMLaaS constraints, we modify AGE with the average highest score within one-hop subgraph of $v_T$, defined as:\begin{align}
S_{avg}(v_{T})=\frac{\sum_{v\in V_{T}}S(v)+S(v_{T})}{V_{T}+1},
\end{align}This ensures that query sequences reflect real-world constraints on node accessibility. The generated query sequence follows a descending order based on $S_{\text{avg}}$.

\noindent
\textbf{GRAIN \cite{zhang2021grain}} (Influence Maximization-Based Query Strategy). At each time step $T$, GRAIN selects a node $v_T$ to maximize the score function $S(\mathcal{G}'_s)$, where:\begin{align}S(\mathcal{G}'_{s})=\frac{\abs{\sigma(\mathcal{G}'_{s})}}{\abs{\hat{\sigma}}}+\gamma\frac{D(\mathcal{G}'_{s})}{\hat{D}}.\end{align}Here, $\sigma(\mathcal{G}'_s)$ represents the influence spread of the selected subgraph $\mathcal{G}'_s$, and $D(\mathcal{G}'_s)$ measures query diversity. The generated query sequence follows a descending order based on $S(\mathcal{G}'_s)$.

\noindent
\textbf{IGP \cite{zhang2022information}} (Label-Informed Query Strategy). At each time step $T$, IGP selects the next node $v_T$ by assuming the pseudo-label with the highest confidence in its softmax output $\hat{y}_T$, thereby maximizing the entropy change in its neighborhood. To improve efficiency, we first pre-filter nodes using a ranking score:\begin{align}
s=\alpha\cdot \mathcal{P}_{centrality}+(1-\alpha)\cdot\mathcal{P}_{entropy},
\end{align}Only the top-ranked nodes are selected for querying, reducing query overhead while maximizing model information extraction.

\subsubsection{Query Sequence Generation}
We utilize the above attack simulation strategies to train surrogate models $\mathcal{M}'$ with corresponding query sequences $\mathcal{Q}_{\mathcal{M}'_j}$. However, not all extracted sequences are considered valid attacks. We apply a quality threshold $F_{\text{threshold}}$, retaining only high-fidelity attack sequences:\begin{align}\mathcal{Q}_{attack}=\{\mathcal{Q}_{\mathcal{M}'_{1}},\mathcal{Q}_{\mathcal{M}'_{2}},\cdots, \mathcal{Q}_{\mathcal{M}'_{H}}\},
\end{align}where $F(\mathcal{M}'_j)>F_{\text{threshold}}$. For training balance, we also include legitimate user query sequences:\begin{align}
\mathcal{Q}_{normal}=\{\mathcal{Q}_{1},\mathcal{Q}_{2},\cdots ,\mathcal{Q}_{N}\}
\end{align}All sequences are labeled (attack = $1$, normal = $0$), shuffled, and assigned to a set of users $\mathcal{U}$, where $|\mathcal{U}|=|\mathcal{Q}_{attack}|+|\mathcal{Q}_{normal}|$. Thus, for each user $u_i \in \mathcal{U}$, a query sequence $\mathcal{Q}_i = \{q_{i,1}, q_{i,2}, \dots, q_{i,T_i}\}$ is generated for training the attack detection model.

\subsection{Attack Detection}
Attack detection in GMLaaS environments is more than a binary classification problem. Real attackers adapt over time steps, modifying their queries based on model responses to evade detection. A detection system that classifies queries individually, without considering their sequential nature or strategic dependencies, is insufficient. Furthermore, the detection mechanism must be resilient, continuously refining its strategy as attack patterns evolve.

\subsubsection{Sequences Embedding}
At time step $T$, each query $q_{i,T}$ is transformed into an embedding $h_{i,T}$, incorporating both node features and graph topological information:\begin{align}
h_{i,T}=\frac{\sum_{v\in \mathcal{V}_{i,T}}h_{v}}{|\mathcal{V}_{i,T}|}\odot \mathcal{S} (\frac{\log(p_{v_{i,T}})}{\log(p_{max})}),
\end{align}where $h_{v}$ represents node embeddings obtained from the GMLaaS model $\mathcal{M}$, $p_{v_{i},T}$ is the $k$-core value of the central node $v_{i,T}$, and $p_{max}$ is the maximum $k$-core value in graph $\mathcal{G}$. Here, $\mathcal{S}(x)$ is a scaling function defined as:\begin{align}
\mathcal{S}(x)=1+\lambda\cdot(\sigma(\lambda\cdot x)-0.5)\times 2,
\end{align}where $\lambda$ is a hyperparameter controlling the effect of topological scaling, ensuring that $h_{i,T}$ is modulated based on graph structure while keeping variations within $[1-\lambda,1+\lambda]$. This embedding mechanism ensures that detection captures structural dependencies, making it harder for adversaries to exploit low-connectivity nodes for stealthy model extraction.

\subsubsection{Sequential Modeling}
Model extraction attacks evolve over time steps—each query is part of a larger, strategic attack sequence. To capture temporal dependencies, we enhance a classic Gated Recurrent Unit (GRU) \cite{cho2014learning} with: (1) Differential input encoding, which highlights query-to-query variations (2) A fusion gate, selectively incorporating past and present query features. (3) A mapping matrix, adjusting hidden states based on past classification decisions. At time step $T$, we compute the differential input $\delta_{i,T}$ as $\delta_{i,T}=h_{i,T}-h_{i,T-1}$, where $h_{i,0}=0$. We introduce a fusion gate $g_T$, which determines how much of the current query embedding $h_{i,T}$ and its differential input $\delta_{i,T}$ should be retained:\begin{align}
g_{T}=\sigma(\boldsymbol{W}_{g}\cdot \text{Concat}(\delta_{i,T},h_{i,T})+b_{g}),
\end{align}where $\boldsymbol{W}_g$ and $b_g$ are learnable parameters. The input is given by:\begin{align}
x_{T}=g_{T}\odot \delta_{i,T}+(1-g_{T})\odot h_{i,T}.
\end{align}This ensures that detection is based on the "story" behind a sequence of queries, rather than treating them as isolated requests.

The sequential hidden state is updated with the GRU mechanism:\begin{align}
h_{i,T}^{seq} = [(1 - z_T) \odot h_{i,T-1}^{seq} + z_T \odot \tilde{h}_T]^{T}\cdot \boldsymbol{m}_{i,T-1},
\end{align}where $z_T$ is the update gate, $\tilde{h}_T$ is the candidate state,  and $\boldsymbol{m}_{i,T-1}$ is a mapping matrix introduced to adjust the hidden state based on historical classification actions. Here, the mapping matrix $\boldsymbol{m}_{i,T-1}$ is computed as:\begin{align}
\boldsymbol{m}_{i,T-1}=\boldsymbol{W}_{a}\cdot p_{d_{i,T-1}}+b_{a},
\end{align}where $p_{d_{i,T-1}}$ represents the classification probabilities from the PPO-based reinforcement learning module, and $W_a, b_a$ are learnable transformation matrices. This adjustment ensures that past classification decisions influence future query analysis, making detection more adaptive to evolving attack strategies.

\subsubsection{Decision Making}
Static detection rules cannot adapt to emerging attack strategies. To enable continuous learning, we integrate reinforcement learning (RL) via Proximal Policy Optimization (PPO) \cite{schulman2017proximal}. At time step $T$, the system observes a state $s_{i,T}$, selects an action $d_{i,T} \in \{0,1\}$ (attacker or legitimate user), and receives a reward $R(s_{i,T}, d_{i,T})$ based on classification correctness:\begin{align}
R(s_{i,T}, d_{i,T})=\begin{cases}
    R_w(s_{i,T}, d_{i,T}), \\
    R_{\text{penalty}}\quad\text{for classification bias},
\end{cases}
\end{align}where $R_w(s_{i,T}, d_{i,T})$ is defined as:\begin{align}
R_w(s_{i,T}, d_{i,T}) = 
\begin{cases} 
w_{\text{TP}}, & \text{if } d_{i,T} =1 \text{ and }l=1, \\
w_{\text{TN}}, & \text{if } d_{i,T} =0 \text{ and } l=0, \\
-w_{\text{FN}}, & \text{if } d_{i,T} =0 \text{ and } l=1, \\
-w_{\text{FP}}, & \text{if } d_{i,T} =1 \text{ and } l=0,
\end{cases}
\end{align}The bias penalty $R_{\text{penalty}}$ is applied when the model overwhelmingly classifies users as attackers or normal users, defined as:\begin{align}
R_{\text{penalty}}=-p,\quad\text{where }p>w_{\text{FN}}>\max\{w_{\text{TP}},w_{\text{TN}},w_{\text{FP}}\}
\end{align}

\section{Theoretical Analysis}
In this section, we establish the theoretical foundation of our proposed framework by linking the graph-based query interaction scenario to fundamental mathematical concepts. Specifically, we model user behavior as a dominating set problem on a weighted graph, where the objective is to balance coverage and weight minimization. In this setting, legitimate users seek to maximize coverage efficiently, whereas attackers attempt to maximize the total weight of accessed nodes while minimizing coverage to evade detection. To address this challenge, we demonstrate that incorporating first-order and second-order differences in query embeddings is crucial for capturing adversarial behaviors, particularly in dynamic query sequences. Additionally, we provide a probabilistic interpretation of ATOM in the appendix.

\subsection{Query as a Dominating Set Problem}
In this subsection, we interpret the process of accessing a subgraph by a user as constructing a dominating set. The objective for a normal user is to cover necessary nodes while minimizing resource costs, typically quantified by node weights. However, adversarial users often follow a different strategy: they attempt to maximize the total weight of accessed nodes while keeping the coverage rate low to remain undetected. Theorem $\ref{1}$ formalizes this trade-off.
\begin{theorem}
\label{1}
  Consider a covering graph $\mathcal{D}$ in the graph $\mathcal{G}=(\mathcal{V},\mathcal{E})$, aiming to cover at least $\beta\in[0,1]$ percent nodes of $\mathcal{G}$, while minimizing $\sum_{u\in \mathcal{A}}w(u)$, where $w(u)$ is the weight of node $u$ and $\mathcal{A}$ represents the set of nodes not being covered, then the maximum covering percentage is given by \begin{align}\beta\le \min\{\frac{\abs{\mathcal{D}}-\frac{W}{\Bar{w_{\mathcal{A}}}}}{\frac{n}{\delta}-\frac{W}{\Bar{w_{\mathcal{A}}}}}, \frac{\abs{\mathcal{D}}\cdot \delta}{n}\}.\end{align} Here, $\abs{\mathcal{D}}$ represents the number of nodes in $\mathcal{D}$, $W$ is the total weight in $\mathcal{G}$, $\Bar{w_{\mathcal{A}}}$ represents the average weight in $\mathcal{A}$ and $\delta$ is the smallest degree for nodes in $\mathcal{D}$.
\end{theorem}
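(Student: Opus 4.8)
The plan is to treat $\mathcal{D}$ as a partial dominating set, let $C = N[\mathcal{D}]$ be the covered vertices with $\abs{C} = \beta n$ (writing $n = \abs{\mathcal{V}}$), and let $\mathcal{A} = \mathcal{V}\setminus C$ be the uncovered set, so that $\abs{\mathcal{A}} = (1-\beta)n$. First I would fix the weight bookkeeping that links the quantities appearing in the bound: by definition of the average, $\sum_{u\in\mathcal{A}} w(u) = \abs{\mathcal{A}}\,\bar{w}_{\mathcal{A}} = (1-\beta)n\,\bar{w}_{\mathcal{A}}$, and splitting the total weight as $W = \sum_{u\in C} w(u) + \sum_{u\in\mathcal{A}} w(u)$ lets me read the ``effective node count'' $W/\bar{w}_{\mathcal{A}}$ as a function of $\beta$. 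This identity is the algebraic bridge through which the term $W/\bar{w}_{\mathcal{A}}$ enters the final estimate.

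The second term $\abs{\mathcal{D}}\delta/n$ I would obtain by a pure degree-counting (double-counting) argument. Every covered vertex lies in $\mathcal{D}$ or is adjacent to a vertex of $\mathcal{D}$, so $C \subseteq N[\mathcal{D}]$; counting incidences between $\mathcal{D}$ and the vertices it dominates, and using that each vertex of $\mathcal{D}$ contributes a number of such incidences governed by $\delta$, yields $\beta n = \abs{C} \le \abs{\mathcal{D}}\,\delta$, i.e. $\beta \le \abs{\mathcal{D}}\delta/n$.

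For the first (generally tighter) term I would combine this degree-counting step with the weight identity. Rearranging the coverage constraint into a lower bound on the solution size gives a relation of the form $\abs{\mathcal{D}} \ge \beta\,\tfrac{n}{\delta} + (1-\beta)\,\tfrac{W}{\bar{w}_{\mathcal{A}}}$, where the first summand is the degree-limited cost of covering $\beta n$ vertices and the second is the weight-budget contribution forced by the uncovered mass $\mathcal{A}$. Isolating $\beta$ by collecting its two occurrences and dividing by $\bigl(\tfrac{n}{\delta} - \tfrac{W}{\bar{w}_{\mathcal{A}}}\bigr)$ reproduces exactly $\beta \le \bigl(\abs{\mathcal{D}} - \tfrac{W}{\bar{w}_{\mathcal{A}}}\bigr)\big/\bigl(\tfrac{n}{\delta} - \tfrac{W}{\bar{w}_{\mathcal{A}}}\bigr)$, and taking the smaller of the two upper bounds yields the claimed $\min\{\cdot,\cdot\}$.

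The main obstacle I anticipate is twofold. First, the degree–coverage step needs its direction pinned down precisely: since $\delta$ is the \emph{minimum} degree over $\mathcal{D}$, I must justify why the per-vertex coverage is controlled by $\delta$ rather than by the maximum degree (e.g. under the regular or near-regular regime implicit in the model, or by reading $\delta$ as the operative per-query coverage capacity), for otherwise $\abs{C}\le\abs{\mathcal{D}}\delta$ would point the wrong way. Second, the division step is sign-sensitive: the denominator $\tfrac{n}{\delta} - \tfrac{W}{\bar{w}_{\mathcal{A}}}$ can vanish or change sign (it equals $n-n=0$ in the uniform-weight, $\delta=1$ case), so I would track the sign of this quantity to preserve the inequality direction and to confirm that the resulting fraction stays within $[0,1]$. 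This case analysis, rather than any single computation, is where the real care lies.
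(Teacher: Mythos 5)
Your overall skeleton matches the paper's: the term $\frac{\abs{\mathcal{D}}\cdot\delta}{n}$ comes from the degree-counting relation $\abs{\mathcal{D}}\cdot\delta\ge \beta n$, and the first term of the $\min$ comes from a second relation among $\abs{\mathcal{D}}$, $\beta$, $\frac{n}{\delta}$ and $\frac{W}{\Bar{w_{\mathcal{A}}}}$ that is rearranged to isolate $\beta$ and then divided through, with attention to the sign of $\frac{n}{\delta}-\frac{W}{\Bar{w_{\mathcal{A}}}}$. You also correctly flag the two soft spots — whether the minimum degree $\delta$ can control per-node coverage from above, and the sign-sensitivity of the division — both of which are genuine weaknesses in the paper's own argument.

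The gap is in the central size–weight inequality, which you assert rather than derive, and which as stated is false. You posit $\abs{\mathcal{D}}\ge \beta\frac{n}{\delta}+(1-\beta)\frac{W}{\Bar{w_{\mathcal{A}}}}$. With uniform weights one has $\frac{W}{\Bar{w_{\mathcal{A}}}}=n$, so this claims $\abs{\mathcal{D}}\ge \beta\frac{n}{\delta}+(1-\beta)n$, which fails for any small covering set with $\beta<1$ (a single node of degree $n/2$ covers half the graph with $\abs{\mathcal{D}}=1$). The paper works with the opposite direction: it asserts $\abs{\mathcal{D}}\le \frac{\beta n}{\delta}+\frac{W_{\mathcal{A}}}{\Bar{w_{\mathcal{A}}}}$, combines it with $W_{\mathcal{A}}\le(1-\beta)\,n\,\Bar{w_{\mathcal{V}}}$ to get $\abs{\mathcal{D}}-\frac{W}{\Bar{w_{\mathcal{A}}}}\le \beta\bigl(\frac{n}{\delta}-\frac{W}{\Bar{w_{\mathcal{A}}}}\bigr)$, and extracts the claimed upper bound on $\beta$ precisely in the regime $\frac{\Bar{w_{\mathcal{V}}}}{\Bar{w_{\mathcal{A}}}}>\frac{1}{\delta}$, where the negative denominator flips the inequality; the other regime yields only a lower bound on $\beta$ and is absorbed by taking the $\min$ with $\frac{\abs{\mathcal{D}}\cdot\delta}{n}$. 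Because your inequality points the other way, your positive-denominator case would produce the upper bound only if your (false) lower bound on $\abs{\mathcal{D}}$ held; with the derivable direction, that same case gives a lower bound on $\beta$, not an upper bound. So the missing piece is a justified version of the size–weight relation together with the matching case analysis — the formula you recover is right, but the step that produces it does not survive scrutiny.
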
Our results indicate that increasing the minimum degree of the covered subgraph while querying would enhance the coverage, which could be adopted to implement a high-quality MEA. Based on this observation, we integrate $k$-core values into the query embeddings to prioritize structurally significant nodes. This ensures that the detected subgraphs remain well-connected, thereby constraining the attacker's ability to manipulate coverage.

\subsection{Incremental Changes in Query Behavior}
In this subsection, we aim to model how queries change over time steps. Specifically, we examine incremental changes in graph coverage and node weights. We define the first-order difference to measure how the weight of uncovered nodes evolves as new nodes are queried, capturing gradual shifts in user behavior. Proposition $\ref{2}$ relates the weight reduction per node to the change in coverage.
\begin{proposition}
\label{2}
     Consider a changing graph $\mathcal{D}_{t-1}$ and $\mathcal{D}_{t}$ in the graph $\mathcal{G}=(\mathcal{V},\mathcal{E})$, where $\mathcal{D}_{t-1}\subset \mathcal{D}_{t}\subset \mathcal{G}$, achieving at least $\beta_{t-1}$ covering rate with the lowest degree $\delta_{t-1}$ and at most $\beta_{t}$ covering rate with the lowest degree $\delta_{t}$, respectively. Also, suppose that $A_{t-1}$ and $A_{t}$ represent the set of nodes that are not covered, with the corresponding weight $W_{{\mathcal{A}}_{t-1}}$, $W_{{\mathcal{A}}_{t}}$ and the average weight $\Bar{w_{A_{t}}}$, $\Bar{w_{A_{t-1}}}$. We then get \begin{align}
    \frac{\Delta W_{\mathcal{A}}}{\Delta\abs{\mathcal{D}}}\le\frac{(\beta_{t}-\beta_{t-1}) \cdot W}{\abs{\mathcal{D}_{t-1}}(1-\frac{\delta_{t-1}}{\delta_{t}})},
    \end{align}and $\delta_{t}>\delta_{t-1}$. Here, $\abs{\mathcal{D}_{t-1}},\abs{\mathcal{D}_{t}}$ represents the number of nodes in $\mathcal{D}_{t-1},\mathcal{D}_{t}$, $W$ is the total weight in $\mathcal{G}$.
\end{proposition}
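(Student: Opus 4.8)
The plan is to invoke Theorem~\ref{1} at the two consecutive configurations $\mathcal{D}_{t-1}$ and $\mathcal{D}_{t}$ and then take a first-order (finite) difference of the resulting relation. First I would rearrange the first branch of the bound in Theorem~\ref{1} into an explicit expression for the covering-set size, namely $\abs{\mathcal{D}} \ge \beta\,\frac{n}{\delta} + (1-\beta)\,\frac{W}{\Bar{w_{\mathcal{A}}}}$, which holds with equality for an extremal (minimal) covering graph that realizes coverage rate $\beta$. Because the proposition fixes $\mathcal{D}_{t-1}$ and $\mathcal{D}_{t}$ as the covering graphs actually attained at the two time steps, I would treat this relation as tight at each step, so that $\abs{\mathcal{D}_{t-1}}$ and $\abs{\mathcal{D}_{t}}$ can be differenced directly rather than only bounded on one side.

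Next I would exploit the inclusion $\mathcal{D}_{t-1}\subset \mathcal{D}_{t}$. Monotonicity of coverage under this inclusion gives $\beta_{t-1}\le\beta_{t}$ and a monotone decrease of the uncovered weight, so $\Delta W_{\mathcal{A}} = W_{A_{t-1}} - W_{A_{t}} \ge 0$ and $\Delta\abs{\mathcal{D}} = \abs{\mathcal{D}_{t}} - \abs{\mathcal{D}_{t-1}} > 0$; this pins down every inequality direction in the subsequent algebra. In the same step I would establish $\delta_{t}>\delta_{t-1}$: the additional nodes queried to raise coverage are the structurally central (high $k$-core) nodes highlighted in the discussion following Theorem~\ref{1}, and adjoining them to the induced covering subgraph raises its minimum degree. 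This is precisely the condition that keeps the factor $1-\delta_{t-1}/\delta_{t}$ strictly positive, so that the claimed bound is well defined.

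With the signs fixed, I would subtract the tight relation at step $t-1$ from that at step $t$ and group the outcome into a degree contribution and a weight contribution via a discrete product-rule decomposition (vary $\delta$ while holding $\beta$ fixed, then vary $\beta$). Holding $\beta$ at $\beta_{t-1}$ while moving $\delta_{t-1}\mapsto\delta_{t}$ produces exactly $-\abs{\mathcal{D}_{t-1}}\,(1-\delta_{t-1}/\delta_{t})$, since $\frac{\beta_{t-1}n}{\delta_{t}}-\frac{\beta_{t-1}n}{\delta_{t-1}} = -\frac{\beta_{t-1}n}{\delta_{t-1}}\cdot\frac{\delta_{t}-\delta_{t-1}}{\delta_{t}}$, which is where the denominator factor of the target originates; varying $\beta$ at fixed $\delta$ supplies the $(\beta_{t}-\beta_{t-1})$ numerator. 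The weight contribution, carrying $W/\Bar{w_{A_{t}}}$ and $W/\Bar{w_{A_{t-1}}}$, I would rewrite through $\Bar{w_{\mathcal{A}}}=W_{\mathcal{A}}/\abs{\mathcal{A}}$ so it is expressed in terms of $\Delta W_{\mathcal{A}}$ and the total weight $W$; solving the differenced identity for $\Delta W_{\mathcal{A}}/\Delta\abs{\mathcal{D}}$ then yields the stated inequality.

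The main obstacle will be the weight bookkeeping in this last step. The average-weight quantities $\Bar{w_{A_{t}}}$ and $\Bar{w_{A_{t-1}}}$ are present in the intermediate relation yet absent from the final bound, so the crux is to show they cancel (or can be dominated uniformly by $W$ and absorbed into the reference size $\abs{\mathcal{D}_{t-1}}$) rather than leaving a residual term. Because $W/\Bar{w_{\mathcal{A}}}$ depends nonlinearly on $\beta$ through $\abs{\mathcal{A}}=(1-\beta)n$, keeping the difference genuinely first-order and linear in $(\beta_{t}-\beta_{t-1})$ is the most delicate part of the argument. A secondary subtlety is the legitimacy of differencing a one-sided bound, which is why I would first argue the extremality of $\mathcal{D}_{t-1}$ and $\mathcal{D}_{t}$; should tightness hold only approximately, the clean identity degrades into an inequality in exactly the direction the proposition asserts.
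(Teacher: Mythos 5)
Your plan hinges on treating the bound of Theorem~\ref{1} as an equality at both time steps and then differencing the resulting identity. That is the gap: Theorem~\ref{1} only gives a one-sided inequality, and subtracting two inequalities that point in the \emph{same} direction yields no conclusion about their difference. Your fallback --- that if tightness fails ``the clean identity degrades into an inequality in exactly the direction the proposition asserts'' --- is not justified; nothing controls the signs of the two slack terms. The paper never assumes extremality. Instead it exploits the deliberate asymmetry in the hypotheses, which your reading misses: $\mathcal{D}_{t-1}$ covers \emph{at least} $\beta_{t-1}$ while $\mathcal{D}_{t}$ covers \emph{at most} $\beta_{t}$. This produces oppositely directed inequalities, $\abs{\mathcal{D}_{t-1}}\delta_{t-1}\ge\beta_{t-1}n$ and $\abs{\mathcal{D}_{t}}\delta_{t}\le\beta_{t}n$, together with $W_{\mathcal{A}_{t-1}}\le(1-\beta_{t-1})n\Bar{w_{\mathcal{V}}}$ and $W_{\mathcal{A}_{t}}\ge(1-\beta_{t})n\Bar{w_{\mathcal{V}}}$. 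These combine into an \emph{upper} bound on the numerator, $\Delta W_{\mathcal{A}}\le(\beta_{t}-\beta_{t-1})W$, and (using $\abs{\mathcal{D}_{t}}\delta_{t}\ge\abs{\mathcal{D}_{t-1}}\delta_{t-1}$) a \emph{lower} bound on the denominator, $\Delta\abs{\mathcal{D}}\ge\abs{\mathcal{D}_{t-1}}(1-\delta_{t-1}/\delta_{t})$; dividing the first bound by the second gives the claim. No differencing of a tight identity is required.

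A second, smaller problem is your weight bookkeeping. You route the argument through $W/\Bar{w_{\mathcal{A}_{t}}}$ and $W/\Bar{w_{\mathcal{A}_{t-1}}}$ and correctly flag that these must cancel, but you offer no mechanism for the cancellation, and there is none in general: $\Bar{w_{\mathcal{A}}}$ varies with $t$ in a way not determined by $\beta$ alone. The paper sidesteps this entirely by bounding the uncovered weight through the \emph{global} average $\Bar{w_{\mathcal{V}}}=W/n$ at both time steps, so only $W$ survives into the final expression. Finally, $\delta_{t}>\delta_{t-1}$ is not something you can derive from the $k$-core discussion; in the paper it functions as the condition needed for $1-\delta_{t-1}/\delta_{t}$ to be positive, i.e.\ effectively an additional hypothesis rather than a conclusion to be established.
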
Here, $\frac{\Delta W_{\mathcal{A}}}{\Delta\abs{\mathcal{D}}}$ acts as a first-order difference, quantifying how the weight of uncovered nodes evolves as new nodes are queried. This provides a direct measure of the trade-off between weight minimization and coverage expansion, enabling the model to capture gradual shifts in adversarial behavior.

However, first-order differences alone may fail when attackers target high-weight nodes while minimizing coverage expansion. In such cases, the weight reduction per added node may fluctuate, making it necessary to consider second-order differences to capture variations in how these changes occur over time steps.

\subsection{Strategy Shifts in Query Behavior}
To capture fluctuations in the rate of coverage expansion and weight reduction discussed above, we introduce the second-order differences to measure the change in first-order differences. Specifically, we aim to address the challenge when attackers adjust their query strategy by alternating between targeting high-weight nodes and optimizing coverage. The second-order difference can reveal these shifts, while the first-order difference may appear stable in this scenario. Proposition \ref{3} demonstrates that it serves as a key indicator of irregular access patterns. The second-order difference in weight is given as follows.
\begin{proposition}
\label{3}
     Consider changing graphs $\mathcal{D}_{t-1},\mathcal{D}_{t}$ and $\mathcal{D}_{t+1}$ in the graph $\mathcal{G}=(\mathcal{V},\mathcal{E})$, where $\mathcal{D}_{t-1}\subset \mathcal{D}_{t}\subset \mathcal{D}_{t+1}\subset \mathcal{G}$, achieving at least $\beta_{t-1}$ covering rate with the lowest degree $\delta_{t-1}$ and at most $\beta_{t}$ covering rate with the lowest degree $\delta_{t}$. Also, there is at least $\beta_{t}'$ covering rate and at most $\beta_{t+1}$ covering rate at time step $t$ and $t+1$. Suppose that $\mathcal{A}_{t-1},\mathcal{A}_{t}$ and $\mathcal{A}_{t+1}$ represent the set of nodes that are not covered, with the corresponding weight $W_{\mathcal{A}_{t-1}},W_{\mathcal{A}_{t}}$, $W_{\mathcal{A}_{t+1}}$ and the average weight $\Bar{w_{\mathcal{A}_{t-1}}},\Bar{w_{\mathcal{A}_{t}}}$, $\Bar{w_{\mathcal{A}_{t+1}}}$. We then get \begin{align}
    \frac{\Delta^2 W_{\mathcal{A}}}{\Delta^2\abs{\mathcal{D}}}\ge \left|\frac{W_{d}}{n}\frac{\Delta \delta_{t}}{\Delta \beta_{t}}-\frac{W\delta_{t+1}}{\abs{\mathcal{D}_{t}}}\frac{\Delta \beta_{t+1}}{\Delta \delta_{t+1}}\right|,
    \end{align}and $\delta_{t+1}>\delta_{t}>\delta_{t-1}$. Here, $\abs{\mathcal{D}_{t-1}},\abs{\mathcal{D}_{t}}$ and $\abs{\mathcal{D}_{t+1}}$ represent the number of nodes in $\mathcal{D}_{t-1},\mathcal{D}_{t}$ and $\mathcal{D}_{t+1}$, $W$ is the total weight in $\mathcal{G}$, assuming $W_{\mathcal{A}_{t-1}}-W_{\mathcal{A}_{t}}\ge W_{d}$.
\end{proposition}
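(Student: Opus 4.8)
The plan is to read $\Delta W_{\mathcal{A}}/\Delta\abs{\mathcal{D}}$ as a single first-order ratio evaluated at two consecutive steps, bound one copy from above and the other from below, and then combine the two one-sided estimates into a lower bound on their difference. Writing $r_t = \Delta W_{\mathcal{A}}^{(t)}/\Delta\abs{\mathcal{D}}^{(t)}$ with $\Delta W_{\mathcal{A}}^{(t)} = W_{\mathcal{A}_{t-1}} - W_{\mathcal{A}_t}$ and $\Delta\abs{\mathcal{D}}^{(t)} = \abs{\mathcal{D}_t} - \abs{\mathcal{D}_{t-1}}$ (and analogously $r_{t+1}$ for the pair $\mathcal{D}_t \subset \mathcal{D}_{t+1}$), I interpret the second-order difference $\Delta^2 W_{\mathcal{A}}/\Delta^2\abs{\mathcal{D}}$ as the change $r_{t+1}-r_t$ in this first-order ratio, so that the target reduces to lower-bounding $\abs{r_t - r_{t+1}}$ by the stated expression. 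Each of the two quantities inside the absolute value should match one of the one-sided bounds on $r_t$ and $r_{t+1}$.

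For the second term I would invoke Proposition~\ref{2} directly on the pair $\mathcal{D}_t \subset \mathcal{D}_{t+1}$, with the roles of $(t-1,t)$ played by $(t,t+1)$. This yields $r_{t+1} \le (\beta_{t+1}-\beta_t')W / (\abs{\mathcal{D}_t}(1-\delta_t/\delta_{t+1}))$, and rewriting $1-\delta_t/\delta_{t+1} = (\delta_{t+1}-\delta_t)/\delta_{t+1} = \Delta\delta_{t+1}/\delta_{t+1}$ collapses this exactly into $r_{t+1} \le \frac{W\delta_{t+1}}{\abs{\mathcal{D}_t}}\frac{\Delta\beta_{t+1}}{\Delta\delta_{t+1}}$, recovering the second quantity in the claim. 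For the first term I would use the standing hypothesis $W_{\mathcal{A}_{t-1}}-W_{\mathcal{A}_t}\ge W_{d}$, which gives $\Delta W_{\mathcal{A}}^{(t)}\ge W_{d}$ in the numerator of $r_t$, and then control $\Delta\abs{\mathcal{D}}^{(t)}$ from above using the second branch of Theorem~\ref{1}, $\beta \le \abs{\mathcal{D}}\delta/n$, applied at steps $t-1$ and $t$ together with the monotonicity $\delta_t > \delta_{t-1}$ and the ``at least $\beta_{t-1}$ / at most $\beta_t$'' coverage conventions. The goal there is the increment estimate $\Delta\abs{\mathcal{D}}^{(t)}\le n\,\Delta\beta_t/\Delta\delta_t$, which then produces $r_t \ge W_{d}/\Delta\abs{\mathcal{D}}^{(t)} \ge \frac{W_{d}}{n}\frac{\Delta\delta_t}{\Delta\beta_t}$, the first quantity in the claim.

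Finally I would subtract the two estimates and pass to absolute values, using the degree ordering $\delta_{t+1}>\delta_t>\delta_{t-1}$ and the positivity of $W_{d}$ to pin down the sign of $r_t - r_{t+1}$, so that the lower bound $r_t \ge \frac{W_{d}}{n}\frac{\Delta\delta_t}{\Delta\beta_t}$ and the upper bound $r_{t+1}\le \frac{W\delta_{t+1}}{\abs{\mathcal{D}_t}}\frac{\Delta\beta_{t+1}}{\Delta\delta_{t+1}}$ combine into $\abs{r_t-r_{t+1}} \ge \abs{\frac{W_{d}}{n}\frac{\Delta\delta_t}{\Delta\beta_t}-\frac{W\delta_{t+1}}{\abs{\mathcal{D}_t}}\frac{\Delta\beta_{t+1}}{\Delta\delta_{t+1}}}$. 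I expect the main obstacle to be the first term rather than the second: Theorem~\ref{1} furnishes only an upper bound on coverage, hence one-directional control on $\abs{\mathcal{D}}$, so converting it into a genuine \emph{upper} bound on the increment $\Delta\abs{\mathcal{D}}^{(t)}$ needs the coverage rates to be (near-)extremal at the boundary and careful bookkeeping of the ``at least/at most'' conventions. A related subtlety is that turning a lower bound on $r_t$ and an upper bound on $r_{t+1}$ into an absolute-value lower bound on their difference is only legitimate once the relative order of the two terms is fixed, which is precisely where the degree monotonicity and the sign of $W_{d}$ must be used.
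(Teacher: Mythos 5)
Your strategy matches the paper's proof essentially step for step: the second-order quantity is read as $\lvert r_t - r_{t+1}\rvert$, the upper bound on $r_{t+1}$ comes from re-applying Proposition~\ref{2} to the pair $(\mathcal{D}_t,\mathcal{D}_{t+1})$ with exactly the rewriting $1-\delta_t/\delta_{t+1}=\Delta\delta_{t+1}/\delta_{t+1}$, the lower bound on $r_t$ comes from $W_d$ over the increment estimate $\lvert\mathcal{D}_t\rvert-\lvert\mathcal{D}_{t-1}\rvert\le n(\beta_t/\delta_t-\beta_{t-1}/\delta_{t-1})\le n\,\Delta\beta_t/\Delta\delta_t$ (the paper bridges the last step via the observation $\beta_t\tfrac{\delta_{t-1}}{\delta_t}+\beta_{t-1}\tfrac{\delta_t}{\delta_{t-1}}>2\beta_{t-1}$), and the two one-sided bounds are then combined inside the absolute value. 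The sign subtlety you flag at the end is genuine—the paper simply asserts $\lvert r_t-r_{t+1}\rvert\ge\lvert A-B\rvert$ from $r_t\ge A$ and $r_{t+1}\le B$ without fixing the order of $A$ and $B$—so your caution there is, if anything, more careful than the published argument.
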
To integrate this into our framework, we define the second-order difference in query embeddings as $\delta_{i,T}=h_{i,T}-h_{i,T-1}$, which captures temporal variations. These help identify adversarial patterns where attackers subtly shift their behavior to avoid detection.

\subsection{Importance of Second-Order Differences}
To present the importance of introducing second-order differences, we establish a condition in Theorem \ref{4} when the second-order differences contribute more significantly to detection than the first-order ones. Specifically, our analysis derives a threshold. If the traditional detection mechanism could exceed this threshold, we say the second-order differences are well worth being considered.
\begin{theorem}
\label{4}
    Consider changing graphs $\mathcal{D}_{t-1},\mathcal{D}_{t}$ and $\mathcal{D}_{t+1}$ with graph $\mathcal{G}=(\mathcal{V},\mathcal{E})$, where $\mathcal{D}_{t-1}\subset \mathcal{D}_{t}\subset \mathcal{D}_{t+1}\subset \mathcal{G}$, aiming at achieving at least $\beta_{t-1}$ covering rate with the lowest degree $\delta_{t-1}$ and at most $\beta_{t}$ covering rate with the lowest degree $\delta_{t}$. Also, there is at least $\beta_{t}'$ covering rate and at most $\beta_{t+1}$ covering rate at time step $t$ and $t+1$. Suppose that $\mathcal{A}_{t-1},\mathcal{A}_{t}$ and $\mathcal{A}_{t+1}$ represent the set of nodes that are not covered, with the corresponding weight $W_{\mathcal{A}_{t-1}},W_{\mathcal{A}_{t}}$, $W_{\mathcal{A}_{t+1}}$ and the average weight $\Bar{w_{\mathcal{A}_{t-1}}},\Bar{w_{\mathcal{A}_{t}}}$, $\Bar{w_{\mathcal{A}_{t+1}}}$. The second-order differences become essential , that is, $\frac{\Delta^2 W_{\mathcal{A}}}{\Delta^2\abs{\mathcal{D}}}\ge \frac{\Delta W_{\mathcal{A}}}{\Delta\abs{\mathcal{D}}}$ holds when \begin{align}
    W_{d}\ge \frac{n\delta_{t+1}\Delta \beta_{t}}{\abs{\mathcal{D}_{t-1}}\Delta \delta_{t}}(\frac{\Delta \beta_{t}}{\Delta \delta_{t}}+\frac{\Delta \beta_{t+1}}{\Delta \delta_{t+1}})W,
    \end{align}and $\delta_{t+1}>\delta_{t}>\delta_{t-1}$. Here, $\abs{\mathcal{D}_{t-1}},\abs{\mathcal{D}_{t}}$ and $\abs{\mathcal{D}_{t+1}}$ represent the number of nodes in $\mathcal{D}_{t-1},\mathcal{D}_{t}$ and $\mathcal{D}_{t+1}$, $W$ is the total weight in $\mathcal{G}$, assuming $W_{\mathcal{A}_{t-1}}-W_{\mathcal{A}_{t}}\ge W_{d}$.
\end{theorem}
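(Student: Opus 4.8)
The plan is to chain the two bounds already established rather than to recompute anything from scratch. Proposition~\ref{2} furnishes an \emph{upper} bound $U_1$ on the first-order quotient $\frac{\Delta W_{\mathcal{A}}}{\Delta\abs{\mathcal{D}}}$, while Proposition~\ref{3} furnishes a \emph{lower} bound $L_2$ on the second-order quotient $\frac{\Delta^2 W_{\mathcal{A}}}{\Delta^2\abs{\mathcal{D}}}$. Since the desired inequality $\frac{\Delta^2 W_{\mathcal{A}}}{\Delta^2\abs{\mathcal{D}}}\ge\frac{\Delta W_{\mathcal{A}}}{\Delta\abs{\mathcal{D}}}$ merely compares these two quotients, it suffices to guarantee $L_2\ge U_1$: the chain $\frac{\Delta^2 W_{\mathcal{A}}}{\Delta^2\abs{\mathcal{D}}}\ge L_2\ge U_1\ge\frac{\Delta W_{\mathcal{A}}}{\Delta\abs{\mathcal{D}}}$ then closes the argument. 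Thus the whole theorem reduces to translating the single scalar inequality $L_2\ge U_1$ into an explicit lower bound on $W_d$.

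First I would resolve the absolute value in $L_2$. In the adversarial regime the theorem targets, where attackers accumulate high-weight nodes while throttling coverage expansion, the term $\frac{W_d}{n}\frac{\Delta\delta_t}{\Delta\beta_t}$ dominates $\frac{W\delta_{t+1}}{\abs{\mathcal{D}_t}}\frac{\Delta\beta_{t+1}}{\Delta\delta_{t+1}}$, so $L_2=\frac{W_d}{n}\frac{\Delta\delta_t}{\Delta\beta_t}-\frac{W\delta_{t+1}}{\abs{\mathcal{D}_t}}\frac{\Delta\beta_{t+1}}{\Delta\delta_{t+1}}$. Next I would rewrite the Proposition~\ref{2} bound by noting $1-\frac{\delta_{t-1}}{\delta_t}=\frac{\Delta\delta_t}{\delta_t}$ and $\beta_t-\beta_{t-1}=\Delta\beta_t$, which gives the compact form $U_1=\frac{\Delta\beta_t\,W\,\delta_t}{\abs{\mathcal{D}_{t-1}}\,\Delta\delta_t}$. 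Substituting both expressions into $L_2\ge U_1$ and multiplying through by the positive factor $\frac{n\Delta\beta_t}{\Delta\delta_t}$ yields the exact threshold $W_d\ge\frac{nW\delta_t(\Delta\beta_t)^2}{\abs{\mathcal{D}_{t-1}}(\Delta\delta_t)^2}+\frac{nW\delta_{t+1}\Delta\beta_t\Delta\beta_{t+1}}{\abs{\mathcal{D}_t}\Delta\delta_t\Delta\delta_{t+1}}$.

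Finally I would relax this exact threshold into the cleaner stated form. Using the monotonicities $\delta_{t+1}>\delta_t$ and $\abs{\mathcal{D}_{t-1}}<\abs{\mathcal{D}_t}$ (the latter from $\mathcal{D}_{t-1}\subset\mathcal{D}_t$), replacing $\delta_t$ by $\delta_{t+1}$ in the first summand and $\abs{\mathcal{D}_t}$ by $\abs{\mathcal{D}_{t-1}}$ in the second only enlarges the right-hand side, producing the common factor $\frac{n\delta_{t+1}\Delta\beta_t}{\abs{\mathcal{D}_{t-1}}\Delta\delta_t}W$ and giving the stated condition $W_d\ge\frac{n\delta_{t+1}\Delta\beta_t}{\abs{\mathcal{D}_{t-1}}\Delta\delta_t}\left(\frac{\Delta\beta_t}{\Delta\delta_t}+\frac{\Delta\beta_{t+1}}{\Delta\delta_{t+1}}\right)W$. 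Because this enlarged threshold is at least the exact one, any $W_d$ satisfying it also satisfies $L_2\ge U_1$, which by the chain of the first paragraph establishes the theorem.

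The main obstacle I anticipate is the honest resolution of the absolute value: one must argue that under the stated configuration the positive term genuinely dominates, so that dropping $\lvert\cdot\rvert$ preserves the lower bound rather than weakening it, and one must verify that the two relaxations act in the enlarging direction simultaneously. A secondary subtlety is that chaining a lower bound against an upper bound is inherently lossy, so the derived condition is sufficient but not necessary; I would present it as such rather than claiming tightness.
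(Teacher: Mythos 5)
Your proposal is correct and follows essentially the same route as the paper: both set the Proposition~\ref{2} upper bound on $\frac{\Delta W_{\mathcal{A}}}{\Delta\abs{\mathcal{D}}}$ against the Proposition~\ref{3} lower bound on $\frac{\Delta^2 W_{\mathcal{A}}}{\Delta^2\abs{\mathcal{D}}}$, solve for $W_d$ to get the exact threshold $\frac{n\Delta\beta_t W}{\Delta\delta_t}\bigl(\frac{\delta_t}{\abs{\mathcal{D}_{t-1}}}\frac{\Delta\beta_t}{\Delta\delta_t}+\frac{\delta_{t+1}}{\abs{\mathcal{D}_t}}\frac{\Delta\beta_{t+1}}{\Delta\delta_{t+1}}\bigr)$, and then enlarge it via $\delta_{t+1}>\delta_t$ and $\abs{\mathcal{D}_{t-1}}<\abs{\mathcal{D}_t}$ to reach the stated condition. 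Your handling of the absolute value and your explicit remark that the condition is sufficient rather than necessary are, if anything, more careful than the paper's own write-up.
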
This insight highlights the importance of dynamically adjusting detection strategies based on the observed query behavior. To leverage this, we incorporate a fusion gate $g_{T}$ that adaptively balances the first and second-order differences. This ensures that the model remains robust against evolving attack strategies, adjusting its detection focus as needed. We prove this theorem by empirical evaluations in section~\ref{5.3} and section~\ref{5.4}.

\section{Experimental Evaluations}
We conduct a series of experiments to evaluate the performance of the proposed framework. Specifically, we seek to address the following research questions: \textbf{RQ1: }How effectively can the proposed model capture attacks compared to baseline methods? \textbf{RQ2: }How do the individual components contribute to the overall performance of the proposed model? \textbf{RQ3: }How does hyperparameter $\lambda$ influence the performance of the proposed model?

\subsection{Experiment Setup}
\textbf{Downstream Task and Datasets. }We adopt the node classification task and evaluate the model on five widely used benchmark datasets: Cora, Citeseer, PubMed, Cornell, and Wisconsin. These datasets can be categorized into two distinct types based on their structural characteristics. In the first three datasets, nodes represent research publications, and edges denote citation relationships. In the remaining datasets, nodes correspond to webpages, and edges indicate hyperlinks between them. Unlike citation networks, webpage networks often exhibit different topological properties, making them valuable for testing the generalization ability of our approach. To simulate real-world adversarial scenarios, we implement MEAs on all five datasets during our experiments.

\noindent
\textbf{GMLaaS Models. }We train a two-layer GCN as the target model within a GMLaaS setting. The model configuration is as follows: The hidden layer is configured with $16$ features with ReLU activation, while the output layer uses softmax activation for classification. We optimize the model using the Adam optimizer with a learning rate of $0.01$, a weight decay of $0.0005$, and $200$ training epochs. Following the transductive setting, the graph used during training is identical to the one used for inference.

\noindent
\textbf{Adversarial Knowledge. }As previously discussed, we assume the attacker has partial knowledge of both the node attributes and graph structure. The adversary is allowed to access a single node and its one-hop subgraph at a time. Under this constraint, we implement three attack algorithms based on AL learning: AGE, GRAIN, and IGP, ensuring a realistic evaluation of our model's robustness against graph-based attacks.

\noindent
\textbf{Baselines. }To thoroughly assess the effectiveness of the proposed detection framework, we compare it against a diverse set of baseline models, categorized as follows. We first employ commonly used neural network architectures for sequential and classification tasks: \textit{Simple MLP \cite{rumelhart1986learning}: }A fundamental feedforward neural network for classification. \textit{RNN \cite{hopfield1982neural}: }A recurrent model that processes sequential data but struggles with long-term dependencies. \textit{LSTM \cite{hochreiter1997long}: }An improved recurrent architecture incorporating gating mechanisms for long-range information retention. \textit{Transformer \cite{vaswani2017attention}: }A self-attention-based architecture that effectively captures long-range dependencies in sequential data. Since MEAs detection can be framed as a time-series classification task \cite{ismail2019deep}, we incorporate several models from this domain: \textit{Crossformer \cite{wang2021crossformer}: }Employs a cross-scale attention mechanism to capture temporal dependencies at different scales. \textit{Autoformer \cite{wu2021autoformer}: }Integrates self-correlation and autoregressive structures to model periodic trends in time series. \textit{TimesNet \cite{wu2022timesnet}: }Reformulates time series as a multi-period representation, capturing temporal variations both within and across periods. \textit{PatchTST \cite{nie2022time}: }Treats time-series segments as receptive fields in a convolutional framework, extracting multi-scale temporal patterns. \textit{Informer \cite{zhou2021informer}: }Uses a sparse attention mechanism to efficiently model long-range dependencies in sequential data. \textit{iTransformer \cite{liu2023itransformer}: }A Transformer-based model that balances global temporal dependencies and local feature extraction through sequence decomposition. We also compare our framework with existing DNN-based detection approaches designed specifically for MEA detection in GMLaaS environments: \textit{PRADA \cite{juuti2019prada}: }Detects adversarial behavior by analyzing statistical deviations in sequential API queries. \textit{VarDetect \cite{pal2021stateful}: }Uses a Variational Autoencoder (VAE) to model user query distributions and identify anomalies.

\begin{table*}[h!]
\centering
\vspace{-1mm}
\caption{Performance Comparison between ATOM and baselines on different metrics and datasets. The best results are in bold.}
\vspace{-2mm}
\renewcommand{\arraystretch}{1.16}
\label{table1-ref}
\resizebox{\textwidth}{!}{
\begin{tabular}{lcccccccccc}
\hline
\textbf{Metrics} & \multicolumn{5}{c}{\textbf{F1 score}} & \multicolumn{5}{c}{\textbf{Recall}} \\
\cmidrule(lr){2-6} \cmidrule(lr){7-11}
\textbf{Dataset} & \textbf{Wisconsin} & \textbf{Cornell} & \textbf{Cora} & \textbf{Citeseer} & \textbf{PubMed} & \textbf{Wisconsin} & \textbf{Cornell} & \textbf{Cora} & \textbf{Citeseer} & \textbf{PubMed} \\
\hline
MLP & $24.32 \pm 15.72$ & $56.71 \pm 9.60$ & $33.56 \pm 13.74$ & $39.06 \pm 9.76$ & $56.44 \pm 7.93$ & $23.57 \pm 8.57$ & $63.12 \pm 14.32$ & $34.15 \pm 14.37$ & $39.17 \pm 10.34$ & $58.42 \pm 11.47$ \\
RNN-A & $66.24 \pm 8.83$ & $52.08 \pm 15.43$ & $ 58.25 \pm 4.70$ & $59.05 \pm 12.90$ & $60.99 \pm 10.92$ & $53.57 \pm 9.78$ & $43.75 \pm 15.93$ & $52.36 \pm 6.31$ & $52.50 \pm 17.50$ & $51.92 \pm 13.73$ \\
LSTM-A & $53.12 \pm 10.36$ & $49.48 \pm 10.03$ & $57.58 \pm 3.28$ & $57.45 \pm 10.89$ & $54.24 \pm 16.93$ & $56.57 \pm 9.70$ & $44.53 \pm 15.53$ & $50.94 \pm 3.53$ & $49.17 \pm 11.46$ & $44.23 \pm 18.34$ \\
Transformer-A& $72.59 \pm 7.05$ & $60.24 \pm 7.97$ & $55.22 \pm 6.22$ & $70.10 \pm 9.77$ & $61.62 \pm 10.28$ & $60.71 \pm 9.04$ & $54.51 \pm 9.09$ & $49.37 \pm 8.22$ & $65.00 \pm 14.34$ & $51.58 \pm 12.19$ \\
\hline
Crossformer & $\small{75.76\pm0.24}$ & $\small{67.79\pm1.42}$ & $\small{75.12\pm4.35}$ & $\small{59.69\pm4.60}$ & $\small{45.21\pm17.12}$ & $\small{79.29\pm2.14}$ & $\small{61.43 \pm 16.2}$ & $\small{63.87\pm14.9}$ & $\small{70.71\pm15.2}$ & $\small{46.09\pm13.41}$ \\
Autoformer & $\small{56.75\pm7.67}$ & $\small{79.07\pm4.80}$ & $\small{65.15\pm17.5}$ & $\small{60.76\pm9.72}$ & $\small{76.44\pm10.8}$ & $\small{55.71\pm16.1}$ & $\small{90.71\pm9.61}$ & $\small{57.42\pm20.1}$ & $\small{62.86\pm19.6}$ & $\small{75.22\pm17.7}$ \\
iTransformer & $\small{56.89\pm6.90}$ & $\small{55.98\pm9.45}$ & $\small{60.30\pm3.12}$ & $\small{62.08\pm9.76}$ & $\small{63.86\pm3.97}$ & $\small{60.00\pm16.0}$ & $\small{59.29\pm17.3}$ & $\small{51.77\pm16.7}$ & $\small{66.43\pm14.8}$ & $\small{63.04\pm13.9}$ \\
TimesNet & $\small{66.63\pm5.90}$ & $\small{81.79\pm4.82}$ & $\small{82.24\pm6.79}$ & $\small{59.22\pm4.36}$ & $\small{61.36\pm0.18}$ & $\small{68.57\pm11.0}$ & $\small{92.43\pm6.59}$ & $\small{84.52\pm12.7}$ & $\small{59.29\pm13.4}$ & $\small{53.91\pm12.3}$ \\
PatchTST& $\small{61.04\pm8.40}$ & $\small{62.96\pm 5.11}$ & $\small{65.79\pm12.9}$ & $\small{57.56\pm8.11}$ & $\small{79.51\pm5.39}$ & $\small{62.86\pm19.1}$ & $\small{64.29\pm16.5}$ & $\small{59.84\pm10.2}$ & $\small{54.29\pm12.6}$ & $\small{82.61\pm13.3}$ \\
Informer & $\small{53.47\pm0.42}$ & $\small{81.36\pm4.91}$ & $\small{72.01\pm1.63}$ & $\small{49.24\pm1.74}$ & $\small{65.17\pm4.37}$ & $\small{54.29\pm14.3}$ & $\small{91.29\pm4.14}$ & $\small{71.45\pm14.8}$ & $\small{52.86\pm17.1}$ & $\small{63.04\pm10.3}$ \\
\hline
PRADA& $\small{19.01\pm1.73}$ & $\small{12.34\pm0.89}$ & $\small{11.23\pm1.05}$ & $\small{13.57\pm1.52}$ & $\small{16.78\pm1.24}$ & $\small{17.54\pm1.42}$ & $\small{13.45\pm0.76}$ & $\small{14.56\pm0.98}$ & $\small{15.89\pm1.13}$ & $\small{18.95\pm1.37}$\\
VarDetect & $\small{64.28\pm1.32}$ & $\small{68.23\pm24.7}$ & $\small{61.95\pm0.10}$ & $\small{53.15\pm1.24}$ & $\small{55.16\pm2.19}$ & $\small{43.29\pm1.14}$ & $\small{60.74\pm14.8}$ & $\small{41.58\pm0.71}$ & $\small{52.17\pm0.93}$ & $\small{49.47\pm2.75}$\\
\hline
ATOM & $\bm{81.48\pm1.02}$ & $\bm{89.66\pm0.97}$ & $\bm{86.88\pm0.93}$ & $\bm{78.89\pm1.39}$ & $\bm{83.24\pm0.68}$ & $\bm{90.91\pm3.67}$ & $\bm{96.15\pm2.11}$ & $\bm{93.65\pm1.07}$ & $\bm{85.71\pm1.71}$ & $\bm{92.42\pm2.02}$ \\
\hline
\end{tabular}
}
\end{table*}

\noindent
\textbf{Ablated Models. }To assess the contribution of each individual component in the proposed framework, we conduct ablation studies by selectively removing or modifying specific modules. We define three ablated model variants: \textit{(1) Replacing the enhanced GRU with a standard GRU. }This ablation removes the fusion gate, allowing us to evaluate the importance of differential input mechanisms. \textit{(2) Replacing the proposed $k$-core-based embeddings with simple mean embeddings. }This experiment highlights the effectiveness of our scaling function, which is further explored in the Evaluation of Parameter Test section. \textit{(3) Removing the mapping matrix. }This variation investigates the role of the mapping matrix in improving robustness by incorporating historical decision-making.

\noindent
\textbf{Evaluation Metrics. }We evaluate the proposed framework using the following performance metrics: \textit{(1) Detection Effectiveness. }We evaluate both the F1 score and recall metrics to measure attack detection performance. A higher F1 score and recall indicate better classification accuracy while minimizing false negatives. \textit{(2) Ablation Study. }We compare the F1 scores of the full model and its ablated versions. This experiment also serves as an empirical validation of Theorem \ref{4}, particularly in analyzing the performance difference between the enhanced GRU and the standard GRU. \textit{(3) Parameter Sensitivity. }We evaluate the impact of different values of the scaling factor $\lambda$ on the F1 score. This analysis provides insights into how parameter tuning influences detection performance.

\subsection{Evaluation of Detection Effectiveness}
\label{5.2}
To address \textbf{RQ1}, we evaluate the effectiveness of ATOM by comparing its performance against multiple baselines. Since no existing methods are specifically designed for graph-based MEA detection in GMLaaS environments, we construct a diverse set of baselines to ensure a fair and comprehensive comparison. Specifically, we adopt classical classification models, replace the fusion GRU in ATOM with alternative sequential networks, and introduce time-series classification models to account for the temporal structure of the task. To maintain clarity, we append the suffix "-A" to the names of sequential baselines in Table~\ref{table1-ref}. Additionally, we incorporate DNN-based detection strategies to examine the limitations of general MEA detection methods. To further evaluate real-time detection performance, we simulate progressive query arrival by assessing all models with $\{25\%, 50\%, 75\%, 100\%\}$ of the query sequences. Here, we highlight the strongest-performing models within each category of the baselines. The performance of Transformer-A, Informer, VarDetect, and ATOM on Cora is visualized in Figure~\ref{figure2_step}, while results for other baselines and datasets are provided in the Appendix. We summarize our observations below: (1) ATOM consistently achieves competitive performance across all baselines. It prioritizes recall value while maintaining a strong F1 score, which is particularly important for MEA detection. Specifically, ATOM achieves a well-balanced distinction between attackers and legitimate users, enhancing its practical applicability. (2) DNN-based MEA detection methods do not generalize well to graph-based MEAs. In particular, PRADA exhibits the weakest performance among all models, as it assumes user queries follow a normal distribution, which is not realistic in real-world attack scenarios. Similarly, VarDetect, despite successfully encoding queries into a latent space, performs comparably to time-series classification models, underscoring the difficulty of directly extending existing DNN-based detection techniques to MEA detection in GNNs. (3)  In real-time settings, ATOM outperforms all baselines at every percentage of queries processed while maintaining low variance. This highlights the advantage of processing queries sequentially rather than treating them as independent samples. During the first $25\%$ of queries processed, all models exhibit similar performance due to the limited available information. However, as more queries are processed, the performance of ATOM rapidly improves, showcasing its ability to adapt dynamically to evolving attack strategies.

\begin{figure}
    \centering
    \vspace{-4mm}
    \begin{minipage}[htp]{\linewidth}
        \centering
        \includegraphics[width=\textwidth, keepaspectratio, trim={1cm 3cm 1cm 2cm}, clip]{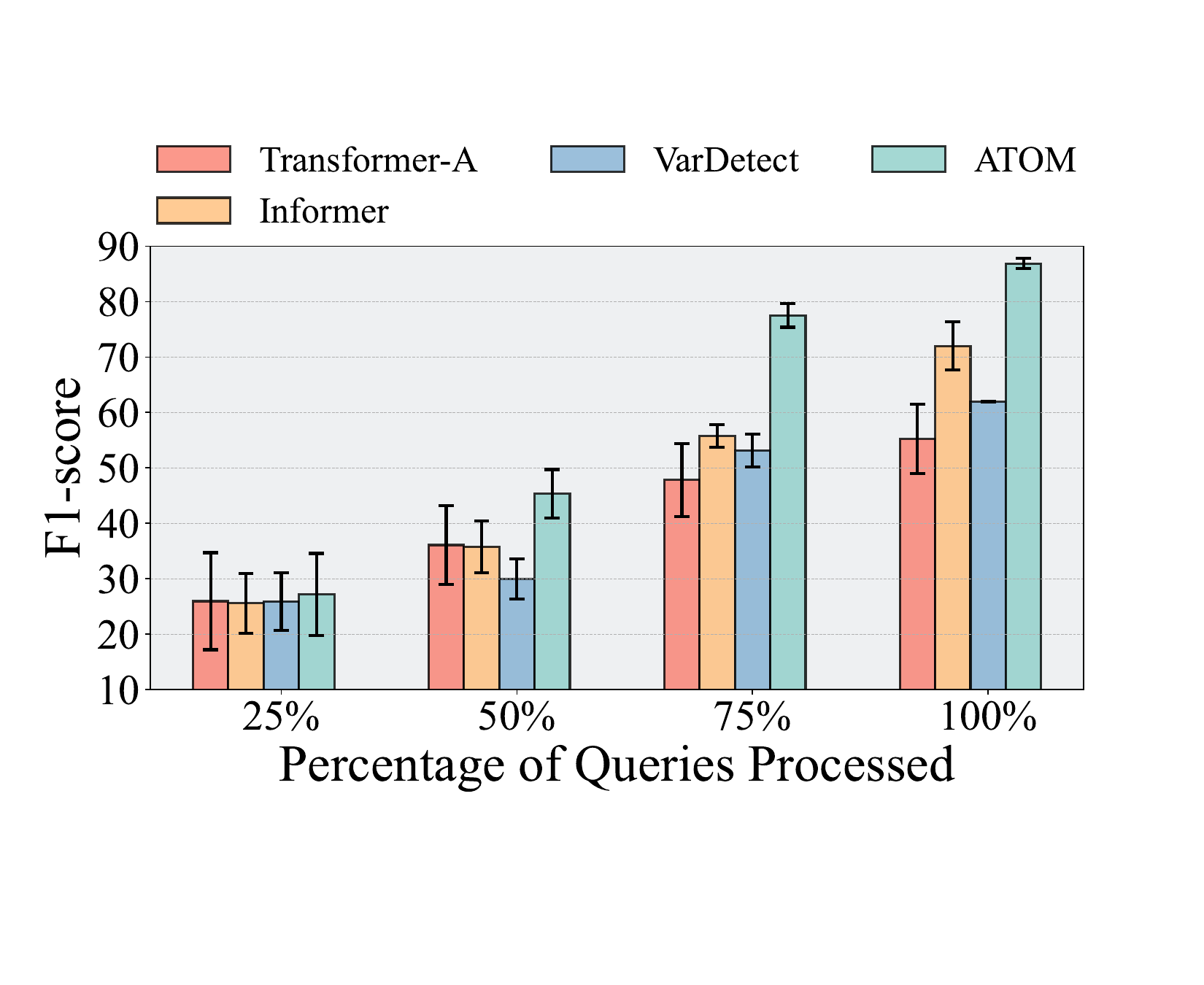}
    \end{minipage}
        \vspace{-4mm}
    \caption{Performance of Representative Models Over Sequential Query Processing on Cora.}
    \label{figure2_step}
        \vspace{-5mm}
\end{figure}

\subsection{Evaluation of Ablation Study}
\label{5.3}
To answer \textbf{RQ2}, we conduct a comprehensive evaluation of the full ATOM model and its ablated counterparts to assess the contribution of individual components. Specifically, we present their F1 scores across five benchmark datasets in Table~\ref{table_ablation}. Through this analysis, we derive the following key observations: (1) The incorporation of second-order differences enhances MEA detection. This observation empirically supports Theorem $\ref{4}$, demonstrating that capturing higher-order temporal variations in user query sequences provides valuable discriminative features for identifying adversarial behavior. By modeling the second-order differences, the framework effectively captures subtle yet critical variations in query patterns, which would otherwise be overlooked by first-order representations. (2) The $k$-core embedding significantly improves performance compared to standard embedding. As shown in Table~\ref{table_ablation}, the $k$-core embedding leads to a noticeable enhancement in classification, reinforcing its effectiveness in MEA detection. This improvement stems from its ability to extract topological features from the graph, which are particularly useful in distinguishing between queries. Furthermore, in the Evaluation of Parameter Test section, we provide a detailed discussion of how different levels of $k$-core embeddings affect model performance, offering additional insights into the optimal choice of $\lambda$. (3) The mapping matrix acts as a specialized normalization mechanism for the hidden state, facilitating network convergence. More specifically, the mapping matrix functions as a scaling transformation applied to the hidden state, with its scaling factor dynamically controlled by the previous time step. This mechanism enhances the model’s robustness by mitigating unstable fluctuations in hidden representations, thereby promoting stable and efficient convergence during training. The empirical results further confirm that the inclusion of the mapping matrix contributes to improved generalization performance.

\begin{table}[t]
\small
\centering
\renewcommand{\arraystretch}{1.1}
\caption{F1 scores from the ablated model on Cora, PubMed, and CiteSeer. The best results are highlighted in bold.}
\resizebox{0.46\textwidth}{!}{
\begin{tabular}{lccc}
    \hline
    \textbf{Model}               & \textbf{Cora}       & \textbf{Citeseer}       & \textbf{PubMed}   \\
    \hline
    ATOM                         & \bm{$86.88$}       & \bm{$78.89$}       & \bm{$83.24$}         \\
    Standard GRU                 & $80.64$           & $71.97$           & $75.47$                 \\
    Simple Embeddings            & $67.92$           & $60.87$           & $54.83$              \\
    No Mapping Matrix            & $81.54$           & $74.71$           & $79.94$                 \\
    \hline
\end{tabular}
}
\label{table_ablation}
\vskip -3ex
\end{table}

\subsection{Evaluation of Parameter Test}
\label{5.4}
To answer \textbf{RQ3}, we investigate the impact of varying the scaling factor $\lambda$ on the proposed model across different datasets. Specifically, we systematically adjust $\lambda$ in ATOM and report its F1 score in Figure~\ref{lambda}. We record our results by $\lambda \in \{0, 0.001, 0.01, 0.1, 0.5, 1, 2, 5, 10\}$. The following key observations are drawn from this evaluation: (1) Incorporating $\lambda$ as a scaling factor for the $k$-core value significantly enhances the ability of sequential networks to process queries. This result highlights the crucial role of topological information in refining node embeddings. By scaling the $k$-core value appropriately, ATOM effectively integrates structural properties into its feature representations, thereby improving its capacity to detect graph-based MEAs. The empirical results suggest that leveraging well-calibrated structural embeddings strengthens the temporal modeling capability of sequential networks, leading to more robust attack detection. 
\begin{figure}
    \centering
        \vspace{-3.5mm}
    \begin{minipage}[htp]{\linewidth}
        \centering
        \includegraphics[width=\textwidth, keepaspectratio, trim={1cm 4cm 1cm 3cm}, clip]{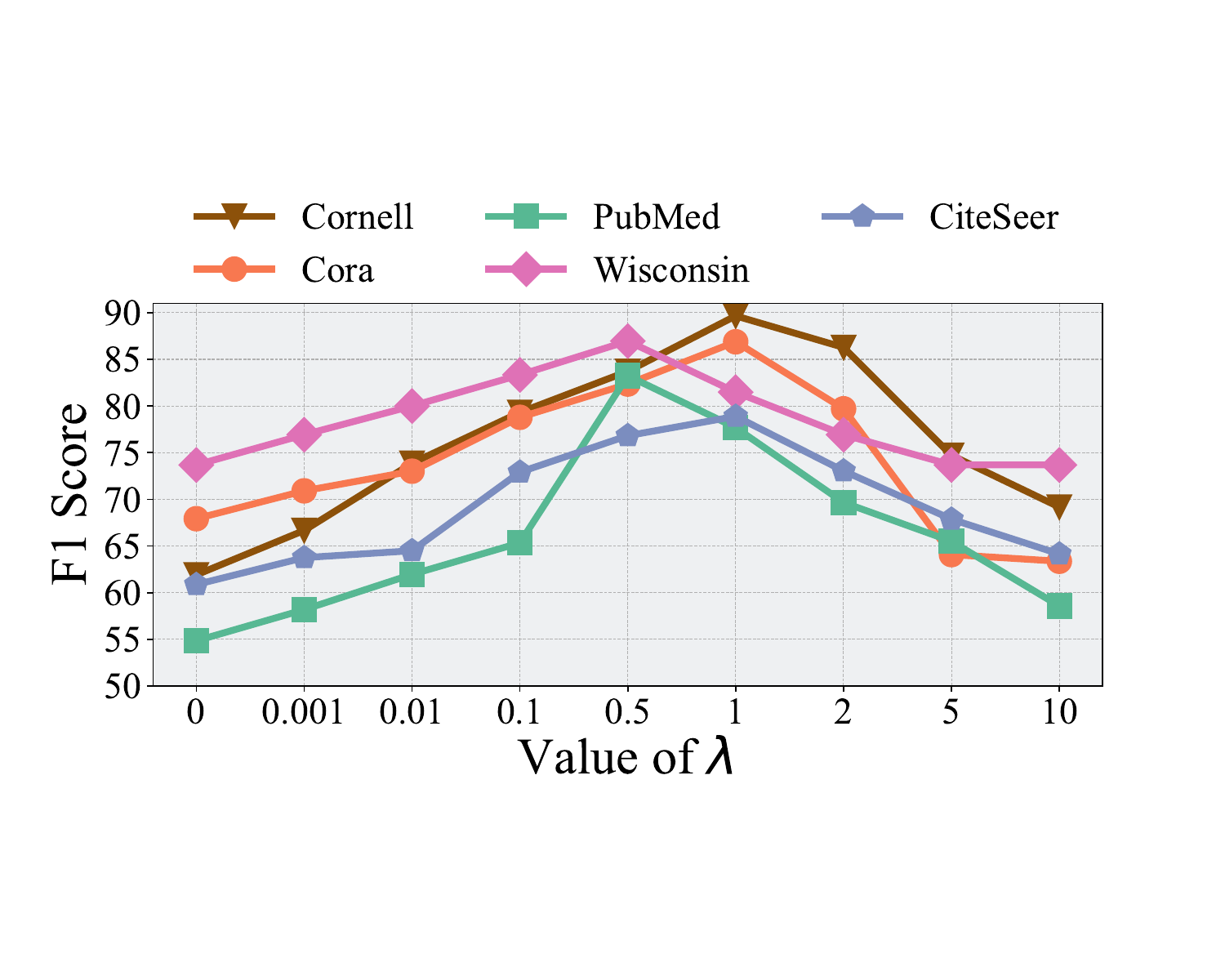}
    \end{minipage}
        \vspace{-3mm}
    \caption{Impact of the adjustment factor $\lambda$ in ATOM.}
    \label{lambda}
        \vspace{-6mm}
\end{figure}
(2) The selection of $\lambda$ is critical; both excessively small and overly large values negatively impact model performance. When $\lambda$ is too small, the influence of graph structural information on node embeddings is minimal, making ATOM's performance comparable to that of models that solely rely on raw node attributes. This limitation prevents ATOM from fully exploiting the underlying network topology, thereby restricting its ability to capture adversarial patterns. When $\lambda$ is set to a moderate value, ATOM achieves its best performance, with an improvement of up to $51.8\%$ for PubMed in the F1 score compared to cases where no scaling factor is introduced ($\lambda = 0$). In particular, when $\lambda \approx 0.5$ to $1$, ATOM effectively balances local attribute information with global topological properties, leading to more discriminative representations for MEA detection. This suggests that a well-calibrated $\lambda$ allows the model to incorporate meaningful structural cues without overwhelming the influence of individual node features. When $\lambda$ becomes excessively large, the F1 score exhibits a downward trend. This decline occurs because an overly strong emphasis on topological structure suppresses the contribution of node attribute features, leading to distorted representations. As a result, the model becomes less effective at distinguishing legitimate user queries from adversarial ones, ultimately impairing its detection capability. (3) The impact of $\lambda$ varies across datasets, suggesting dataset-dependent optimal values. While an optimal range of $\lambda \approx 0.5$ to $1$ is generally observed, the exact value that maximizes performance may differ based on dataset-specific properties such as graph sparsity, node connectivity, and query distribution patterns. This indicates that tuning $\lambda$ should be approached in a data-driven manner, potentially through cross-validation, to achieve the best trade-off between node attributes and topological information.

\section{Conclusion}
In this paper, we propose ATOM, a novel framework for detecting graph-based MEAs in GMLaaS environments under the transductive setting. To the best of our knowledge, we are the first to investigate the novel problem of detecting graph-based MEAs in GMLaaS. To address this problem, we design ATOM by focusing on real-time detecting and adaptive attacks. Specifically, we introduce sequential modeling and reinforcement learning to dynamically detect evolving attack patterns. We further conduct theoretical analysis for the query behavior and establish a theoretical foundation for our proposed framework. Extensive experiments on real-world datasets demonstrate ATOM’s superior performance over baselines in the real-time detecting scenario. Meanwhile, two future directions are worth further investigation. First, we focus on the simulated queries in this paper due to the limited availability of query datasets in GMLaaS. Thus, exploring large-scale industrial query datasets from real-world scenarios could provide a more accurate reflection of practical attack behaviors. Second, to better capture realistic user behavior, it is essential to investigate distributed adversaries who coordinate attacks across multiple accounts, which could provide valuable insights for enhancing detection mechanisms.


\bibliographystyle{ACM-Reference-Format}
\bibliography{main}


\begin{thebibliography}{53}


\ifx \showCODEN    \undefined \def \showCODEN     #1{\unskip}     \fi
\ifx \showISBNx    \undefined \def \showISBNx     #1{\unskip}     \fi
\ifx \showISBNxiii \undefined \def \showISBNxiii  #1{\unskip}     \fi
\ifx \showISSN     \undefined \def \showISSN      #1{\unskip}     \fi
\ifx \showLCCN     \undefined \def \showLCCN      #1{\unskip}     \fi
\ifx \shownote     \undefined \def \shownote      #1{#1}          \fi
\ifx \showarticletitle \undefined \def \showarticletitle #1{#1}   \fi
\ifx \showURL      \undefined \def \showURL       {\relax}        \fi
\providecommand\bibfield[2]{#2}
\providecommand\bibinfo[2]{#2}
\providecommand\natexlab[1]{#1}
\providecommand\showeprint[2][]{arXiv:#2}

\bibitem[Bongini et~al\mbox{.}(2022)]%
        {bongini2022biognn}
\bibfield{author}{\bibinfo{person}{Pietro Bongini}, \bibinfo{person}{Niccol{\`o} Pancino}, \bibinfo{person}{Franco Scarselli}, {and} \bibinfo{person}{Monica Bianchini}.} \bibinfo{year}{2022}\natexlab{}.
\newblock \showarticletitle{BioGNN: how graph neural networks can solve biological problems}.
\newblock In \bibinfo{booktitle}{\emph{Artificial Intelligence and Machine Learning for Healthcare: Vol. 1: Image and Data Analytics}}. \bibinfo{publisher}{Springer}, \bibinfo{pages}{211--231}.
\newblock


\bibitem[Cai et~al\mbox{.}(2017)]%
        {cai2017active}
\bibfield{author}{\bibinfo{person}{Hongyun Cai}, \bibinfo{person}{Vincent~W Zheng}, {and} \bibinfo{person}{Kevin Chen-Chuan Chang}.} \bibinfo{year}{2017}\natexlab{}.
\newblock \showarticletitle{Active learning for graph embedding}.
\newblock \bibinfo{journal}{\emph{arXiv preprint arXiv:1705.05085}} (\bibinfo{year}{2017}).
\newblock


\bibitem[Chakraborty et~al\mbox{.}(2022)]%
        {chakraborty2022dynamarks}
\bibfield{author}{\bibinfo{person}{Abhishek Chakraborty}, \bibinfo{person}{Daniel Xing}, \bibinfo{person}{Yuntao Liu}, {and} \bibinfo{person}{Ankur Srivastava}.} \bibinfo{year}{2022}\natexlab{}.
\newblock \showarticletitle{Dynamarks: Defending against deep learning model extraction using dynamic watermarking}.
\newblock \bibinfo{journal}{\emph{arXiv preprint arXiv:2207.13321}} (\bibinfo{year}{2022}).
\newblock


\bibitem[Chang et~al\mbox{.}(2023)]%
        {chang2023kgtn}
\bibfield{author}{\bibinfo{person}{Chao Chang}, \bibinfo{person}{Junming Zhou}, \bibinfo{person}{Yu Weng}, \bibinfo{person}{Xiangwei Zeng}, \bibinfo{person}{Zhengyang Wu}, \bibinfo{person}{Chang-Dong Wang}, {and} \bibinfo{person}{Yong Tang}.} \bibinfo{year}{2023}\natexlab{}.
\newblock \showarticletitle{KGTN: Knowledge Graph Transformer Network for explainable multi-category item recommendation}.
\newblock \bibinfo{journal}{\emph{Knowledge-Based Systems}}  \bibinfo{volume}{278} (\bibinfo{year}{2023}), \bibinfo{pages}{110854}.
\newblock


\bibitem[Cho et~al\mbox{.}(2014)]%
        {cho2014learning}
\bibfield{author}{\bibinfo{person}{Kyunghyun Cho}, \bibinfo{person}{Bart Van~Merri{\"e}nboer}, \bibinfo{person}{Caglar Gulcehre}, \bibinfo{person}{Dzmitry Bahdanau}, \bibinfo{person}{Fethi Bougares}, \bibinfo{person}{Holger Schwenk}, {and} \bibinfo{person}{Yoshua Bengio}.} \bibinfo{year}{2014}\natexlab{}.
\newblock \showarticletitle{Learning phrase representations using RNN encoder-decoder for statistical machine translation}.
\newblock \bibinfo{journal}{\emph{arXiv preprint arXiv:1406.1078}} (\bibinfo{year}{2014}).
\newblock


\bibitem[Correia et~al\mbox{.}(2024)]%
        {correia2024deepmol}
\bibfield{author}{\bibinfo{person}{Jo{\~a}o Correia}, \bibinfo{person}{Jo{\~a}o Capela}, {and} \bibinfo{person}{Miguel Rocha}.} \bibinfo{year}{2024}\natexlab{}.
\newblock \showarticletitle{Deepmol: an automated machine and deep learning framework for computational chemistry}.
\newblock \bibinfo{journal}{\emph{Journal of Cheminformatics}} \bibinfo{volume}{16}, \bibinfo{number}{1} (\bibinfo{year}{2024}), \bibinfo{pages}{1--17}.
\newblock


\bibitem[ElDahshan et~al\mbox{.}(2024)]%
        {eldahshan2024optimized}
\bibfield{author}{\bibinfo{person}{Kamal~A ElDahshan}, \bibinfo{person}{Gaber~E Abutaleb}, \bibinfo{person}{Berihan~R Elemary}, \bibinfo{person}{Ebeid~A Ebeid}, {and} \bibinfo{person}{AbdAllah~A AlHabshy}.} \bibinfo{year}{2024}\natexlab{}.
\newblock \showarticletitle{An optimized intelligent open-source MLaaS framework for user-friendly clustering and anomaly detection}.
\newblock \bibinfo{journal}{\emph{The Journal of Supercomputing}} \bibinfo{volume}{80}, \bibinfo{number}{18} (\bibinfo{year}{2024}), \bibinfo{pages}{26658--26684}.
\newblock


\bibitem[Gao et~al\mbox{.}(2022)]%
        {gao2022graph}
\bibfield{author}{\bibinfo{person}{Chen Gao}, \bibinfo{person}{Xiang Wang}, \bibinfo{person}{Xiangnan He}, {and} \bibinfo{person}{Yong Li}.} \bibinfo{year}{2022}\natexlab{}.
\newblock \showarticletitle{Graph neural networks for recommender system}. In \bibinfo{booktitle}{\emph{Proceedings of the fifteenth ACM international conference on web search and data mining}}. \bibinfo{pages}{1623--1625}.
\newblock


\bibitem[Guan et~al\mbox{.}(2024)]%
        {GUAN2024112144}
\bibfield{author}{\bibinfo{person}{Faqian Guan}, \bibinfo{person}{Tianqing Zhu}, \bibinfo{person}{Hanjin Tong}, {and} \bibinfo{person}{Wanlei Zhou}.} \bibinfo{year}{2024}\natexlab{}.
\newblock \showarticletitle{A realistic model extraction attack against graph neural networks}.
\newblock \bibinfo{journal}{\emph{Knowledge-Based Systems}}  \bibinfo{volume}{300} (\bibinfo{year}{2024}), \bibinfo{pages}{112--144}.
\newblock


\bibitem[Hochreiter(1997)]%
        {hochreiter1997long}
\bibfield{author}{\bibinfo{person}{S Hochreiter}.} \bibinfo{year}{1997}\natexlab{}.
\newblock \showarticletitle{Long Short-term Memory}.
\newblock \bibinfo{journal}{\emph{Neural Computation MIT-Press}} (\bibinfo{year}{1997}).
\newblock


\bibitem[Hopfield(1982)]%
        {hopfield1982neural}
\bibfield{author}{\bibinfo{person}{John~J Hopfield}.} \bibinfo{year}{1982}\natexlab{}.
\newblock \showarticletitle{Neural networks and physical systems with emergent collective computational abilities.}
\newblock \bibinfo{journal}{\emph{Proceedings of the national academy of sciences}} \bibinfo{volume}{79}, \bibinfo{number}{8} (\bibinfo{year}{1982}), \bibinfo{pages}{2554--2558}.
\newblock


\bibitem[Ismail~Fawaz et~al\mbox{.}(2019)]%
        {ismail2019deep}
\bibfield{author}{\bibinfo{person}{Hassan Ismail~Fawaz}, \bibinfo{person}{Germain Forestier}, \bibinfo{person}{Jonathan Weber}, \bibinfo{person}{Lhassane Idoumghar}, {and} \bibinfo{person}{Pierre-Alain Muller}.} \bibinfo{year}{2019}\natexlab{}.
\newblock \showarticletitle{Deep learning for time series classification: a review}.
\newblock \bibinfo{journal}{\emph{Data mining and knowledge discovery}} \bibinfo{volume}{33}, \bibinfo{number}{4} (\bibinfo{year}{2019}), \bibinfo{pages}{917--963}.
\newblock


\bibitem[Jia et~al\mbox{.}(2021)]%
        {jia2021entangled}
\bibfield{author}{\bibinfo{person}{Hengrui Jia}, \bibinfo{person}{Christopher~A Choquette-Choo}, \bibinfo{person}{Varun Chandrasekaran}, {and} \bibinfo{person}{Nicolas Papernot}.} \bibinfo{year}{2021}\natexlab{}.
\newblock \showarticletitle{Entangled watermarks as a defense against model extraction}. In \bibinfo{booktitle}{\emph{30th USENIX security symposium (USENIX Security 21)}}. \bibinfo{pages}{1937--1954}.
\newblock


\bibitem[Juuti et~al\mbox{.}(2019)]%
        {juuti2019prada}
\bibfield{author}{\bibinfo{person}{Mika Juuti}, \bibinfo{person}{Sebastian Szyller}, \bibinfo{person}{Samuel Marchal}, {and} \bibinfo{person}{N Asokan}.} \bibinfo{year}{2019}\natexlab{}.
\newblock \showarticletitle{PRADA: protecting against DNN model stealing attacks}. In \bibinfo{booktitle}{\emph{2019 IEEE European Symposium on Security and Privacy (EuroS\&P)}}. IEEE, \bibinfo{pages}{512--527}.
\newblock


\bibitem[Kingma and Ba(2014)]%
        {kingma2014adam}
\bibfield{author}{\bibinfo{person}{Diederik~P Kingma} {and} \bibinfo{person}{Jimmy Ba}.} \bibinfo{year}{2014}\natexlab{}.
\newblock \showarticletitle{Adam: A method for stochastic optimization}.
\newblock \bibinfo{journal}{\emph{arXiv preprint arXiv:1412.6980}} (\bibinfo{year}{2014}).
\newblock


\bibitem[Kipf and Welling(2016a)]%
        {kipf2016semi}
\bibfield{author}{\bibinfo{person}{Thomas~N Kipf} {and} \bibinfo{person}{Max Welling}.} \bibinfo{year}{2016}\natexlab{a}.
\newblock \showarticletitle{Semi-supervised classification with graph convolutional networks}.
\newblock \bibinfo{journal}{\emph{arXiv preprint arXiv:1609.02907}} (\bibinfo{year}{2016}).
\newblock


\bibitem[Kipf and Welling(2016b)]%
        {kipf2016variational}
\bibfield{author}{\bibinfo{person}{Thomas~N Kipf} {and} \bibinfo{person}{Max Welling}.} \bibinfo{year}{2016}\natexlab{b}.
\newblock \showarticletitle{Variational graph auto-encoders}.
\newblock \bibinfo{journal}{\emph{arXiv preprint arXiv:1611.07308}} (\bibinfo{year}{2016}).
\newblock


\bibitem[Lederer et~al\mbox{.}(2023)]%
        {lederer2023identifying}
\bibfield{author}{\bibinfo{person}{Isabell Lederer}, \bibinfo{person}{Rudolf Mayer}, {and} \bibinfo{person}{Andreas Rauber}.} \bibinfo{year}{2023}\natexlab{}.
\newblock \showarticletitle{Identifying appropriate intellectual property protection mechanisms for machine learning models: A systematization of watermarking, fingerprinting, model access, and attacks}.
\newblock \bibinfo{journal}{\emph{IEEE Transactions on Neural Networks and Learning Systems}} (\bibinfo{year}{2023}).
\newblock


\bibitem[Lee et~al\mbox{.}(2019)]%
        {lee2019defending}
\bibfield{author}{\bibinfo{person}{Taesung Lee}, \bibinfo{person}{Benjamin Edwards}, \bibinfo{person}{Ian Molloy}, {and} \bibinfo{person}{Dong Su}.} \bibinfo{year}{2019}\natexlab{}.
\newblock \showarticletitle{Defending against neural network model stealing attacks using deceptive perturbations}. In \bibinfo{booktitle}{\emph{2019 IEEE Security and Privacy Workshops (SPW)}}. \bibinfo{pages}{43--49}.
\newblock


\bibitem[Liang et~al\mbox{.}(2024)]%
        {liang2024model}
\bibfield{author}{\bibinfo{person}{Jiacheng Liang}, \bibinfo{person}{Ren Pang}, \bibinfo{person}{Changjiang Li}, {and} \bibinfo{person}{Ting Wang}.} \bibinfo{year}{2024}\natexlab{}.
\newblock \showarticletitle{Model extraction attacks revisited}. In \bibinfo{booktitle}{\emph{Proceedings of the 19th ACM Asia Conference on Computer and Communications Security}}. \bibinfo{pages}{1231--1245}.
\newblock


\bibitem[Liu et~al\mbox{.}(2021b)]%
        {liu2021graph}
\bibfield{author}{\bibinfo{person}{GuanJun Liu}, \bibinfo{person}{Jing Tang}, \bibinfo{person}{Yue Tian}, {and} \bibinfo{person}{Jiacun Wang}.} \bibinfo{year}{2021}\natexlab{b}.
\newblock \showarticletitle{Graph neural network for credit card fraud detection}. In \bibinfo{booktitle}{\emph{2021 International Conference on Cyber-Physical Social Intelligence (ICCSI)}}. IEEE, \bibinfo{pages}{1--6}.
\newblock


\bibitem[Liu et~al\mbox{.}(2022)]%
        {liu2022seinspect}
\bibfield{author}{\bibinfo{person}{Xinjing Liu}, \bibinfo{person}{Zhuo Ma}, \bibinfo{person}{Yang Liu}, \bibinfo{person}{Zhan Qin}, \bibinfo{person}{Junwei Zhang}, {and} \bibinfo{person}{Zhuzhu Wang}.} \bibinfo{year}{2022}\natexlab{}.
\newblock \showarticletitle{SeInspect: Defending model stealing via heterogeneous semantic inspection}. In \bibinfo{booktitle}{\emph{European Symposium on Research in Computer Security}}. Springer, \bibinfo{pages}{610--630}.
\newblock


\bibitem[Liu et~al\mbox{.}(2021a)]%
        {liu2021pick}
\bibfield{author}{\bibinfo{person}{Yang Liu}, \bibinfo{person}{Xiang Ao}, \bibinfo{person}{Zidi Qin}, \bibinfo{person}{Jianfeng Chi}, \bibinfo{person}{Jinghua Feng}, \bibinfo{person}{Hao Yang}, {and} \bibinfo{person}{Qing He}.} \bibinfo{year}{2021}\natexlab{a}.
\newblock \showarticletitle{Pick and choose: a GNN-based imbalanced learning approach for fraud detection}. In \bibinfo{booktitle}{\emph{Proceedings of the web conference 2021}}. \bibinfo{pages}{3168--3177}.
\newblock


\bibitem[Liu et~al\mbox{.}(2023)]%
        {liu2023itransformer}
\bibfield{author}{\bibinfo{person}{Yong Liu}, \bibinfo{person}{Tengge Hu}, \bibinfo{person}{Haoran Zhang}, \bibinfo{person}{Haixu Wu}, \bibinfo{person}{Shiyu Wang}, \bibinfo{person}{Lintao Ma}, {and} \bibinfo{person}{Mingsheng Long}.} \bibinfo{year}{2023}\natexlab{}.
\newblock \showarticletitle{itransformer: Inverted transformers are effective for time series forecasting}.
\newblock \bibinfo{journal}{\emph{arXiv preprint arXiv:2310.06625}} (\bibinfo{year}{2023}).
\newblock


\bibitem[Lukas et~al\mbox{.}(2019)]%
        {lukas2019deep}
\bibfield{author}{\bibinfo{person}{Nils Lukas}, \bibinfo{person}{Yuxuan Zhang}, {and} \bibinfo{person}{Florian Kerschbaum}.} \bibinfo{year}{2019}\natexlab{}.
\newblock \showarticletitle{Deep neural network fingerprinting by conferrable adversarial examples}.
\newblock \bibinfo{journal}{\emph{arXiv preprint arXiv:1912.00888}} (\bibinfo{year}{2019}).
\newblock


\bibitem[Maini et~al\mbox{.}(2021)]%
        {maini2021dataset}
\bibfield{author}{\bibinfo{person}{Pratyush Maini}, \bibinfo{person}{Mohammad Yaghini}, {and} \bibinfo{person}{Nicolas Papernot}.} \bibinfo{year}{2021}\natexlab{}.
\newblock \showarticletitle{Dataset inference: Ownership resolution in machine learning}.
\newblock \bibinfo{journal}{\emph{arXiv preprint arXiv:2104.10706}} (\bibinfo{year}{2021}).
\newblock


\bibitem[Mao et~al\mbox{.}(2022)]%
        {mao2022using}
\bibfield{author}{\bibinfo{person}{Xuting Mao}, \bibinfo{person}{Mingxi Liu}, {and} \bibinfo{person}{Yinghui Wang}.} \bibinfo{year}{2022}\natexlab{}.
\newblock \showarticletitle{Using GNN to detect financial fraud based on the related party transactions network}.
\newblock \bibinfo{journal}{\emph{Procedia Computer Science}}  \bibinfo{volume}{214} (\bibinfo{year}{2022}), \bibinfo{pages}{351--358}.
\newblock


\bibitem[Nie et~al\mbox{.}(2022)]%
        {nie2022time}
\bibfield{author}{\bibinfo{person}{Yuqi Nie}, \bibinfo{person}{Nam~H Nguyen}, \bibinfo{person}{Phanwadee Sinthong}, {and} \bibinfo{person}{Jayant Kalagnanam}.} \bibinfo{year}{2022}\natexlab{}.
\newblock \showarticletitle{A time series is worth 64 words: Long-term forecasting with transformers}.
\newblock \bibinfo{journal}{\emph{arXiv preprint arXiv:2211.14730}} (\bibinfo{year}{2022}).
\newblock


\bibitem[Pal et~al\mbox{.}(2021)]%
        {pal2021stateful}
\bibfield{author}{\bibinfo{person}{Soham Pal}, \bibinfo{person}{Yash Gupta}, \bibinfo{person}{Aditya Kanade}, {and} \bibinfo{person}{Shirish Shevade}.} \bibinfo{year}{2021}\natexlab{}.
\newblock \showarticletitle{Stateful detection of model extraction attacks}.
\newblock \bibinfo{journal}{\emph{arXiv preprint arXiv:2107.05166}} (\bibinfo{year}{2021}).
\newblock


\bibitem[Paszke et~al\mbox{.}(2017)]%
        {paszke2017automatic}
\bibfield{author}{\bibinfo{person}{Adam Paszke}, \bibinfo{person}{Sam Gross}, \bibinfo{person}{Soumith Chintala}, \bibinfo{person}{Gregory Chanan}, \bibinfo{person}{Edward Yang}, \bibinfo{person}{Zachary DeVito}, \bibinfo{person}{Zeming Lin}, \bibinfo{person}{Alban Desmaison}, \bibinfo{person}{Luca Antiga}, {and} \bibinfo{person}{Adam Lerer}.} \bibinfo{year}{2017}\natexlab{}.
\newblock \showarticletitle{Automatic differentiation in pytorch}.
\newblock  (\bibinfo{year}{2017}).
\newblock


\bibitem[R{\'e}au et~al\mbox{.}(2023)]%
        {reau2023deeprank}
\bibfield{author}{\bibinfo{person}{Manon R{\'e}au}, \bibinfo{person}{Nicolas Renaud}, \bibinfo{person}{Li~C Xue}, {and} \bibinfo{person}{Alexandre~MJJ Bonvin}.} \bibinfo{year}{2023}\natexlab{}.
\newblock \showarticletitle{DeepRank-GNN: a graph neural network framework to learn patterns in protein--protein interfaces}.
\newblock \bibinfo{journal}{\emph{Bioinformatics}} \bibinfo{volume}{39}, \bibinfo{number}{1} (\bibinfo{year}{2023}), \bibinfo{pages}{btac759}.
\newblock


\bibitem[Rumelhart et~al\mbox{.}(1986)]%
        {rumelhart1986learning}
\bibfield{author}{\bibinfo{person}{David~E Rumelhart}, \bibinfo{person}{Geoffrey~E Hinton}, {and} \bibinfo{person}{Ronald~J Williams}.} \bibinfo{year}{1986}\natexlab{}.
\newblock \showarticletitle{Learning representations by back-propagating errors}.
\newblock \bibinfo{journal}{\emph{nature}} \bibinfo{volume}{323}, \bibinfo{number}{6088} (\bibinfo{year}{1986}), \bibinfo{pages}{533--536}.
\newblock


\bibitem[Scarselli et~al\mbox{.}(2008)]%
        {scarselli2008graph}
\bibfield{author}{\bibinfo{person}{Franco Scarselli}, \bibinfo{person}{Marco Gori}, \bibinfo{person}{Ah~Chung Tsoi}, \bibinfo{person}{Markus Hagenbuchner}, {and} \bibinfo{person}{Gabriele Monfardini}.} \bibinfo{year}{2008}\natexlab{}.
\newblock \showarticletitle{The graph neural network model}.
\newblock \bibinfo{journal}{\emph{IEEE transactions on neural networks}} \bibinfo{volume}{20}, \bibinfo{number}{1} (\bibinfo{year}{2008}), \bibinfo{pages}{61--80}.
\newblock


\bibitem[Schulman et~al\mbox{.}(2017)]%
        {schulman2017proximal}
\bibfield{author}{\bibinfo{person}{John Schulman}, \bibinfo{person}{Filip Wolski}, \bibinfo{person}{Prafulla Dhariwal}, \bibinfo{person}{Alec Radford}, {and} \bibinfo{person}{Oleg Klimov}.} \bibinfo{year}{2017}\natexlab{}.
\newblock \showarticletitle{Proximal policy optimization algorithms}.
\newblock \bibinfo{journal}{\emph{arXiv preprint arXiv:1707.06347}} (\bibinfo{year}{2017}).
\newblock


\bibitem[Settles(2009)]%
        {settles2009active}
\bibfield{author}{\bibinfo{person}{Burr Settles}.} \bibinfo{year}{2009}\natexlab{}.
\newblock \showarticletitle{Active learning literature survey}.
\newblock  (\bibinfo{year}{2009}).
\newblock


\bibitem[Shlomi et~al\mbox{.}(2020)]%
        {shlomi2020graph}
\bibfield{author}{\bibinfo{person}{Jonathan Shlomi}, \bibinfo{person}{Peter Battaglia}, {and} \bibinfo{person}{Jean-Roch Vlimant}.} \bibinfo{year}{2020}\natexlab{}.
\newblock \showarticletitle{Graph neural networks in particle physics}.
\newblock \bibinfo{journal}{\emph{Machine Learning: Science and Technology}} \bibinfo{volume}{2}, \bibinfo{number}{2} (\bibinfo{year}{2020}), \bibinfo{pages}{021001}.
\newblock


\bibitem[Tang et~al\mbox{.}(2024)]%
        {tang2024modelguard}
\bibfield{author}{\bibinfo{person}{Minxue Tang}, \bibinfo{person}{Anna Dai}, \bibinfo{person}{Louis DiValentin}, \bibinfo{person}{Aolin Ding}, \bibinfo{person}{Amin Hass}, \bibinfo{person}{Neil~Zhenqiang Gong}, {and} \bibinfo{person}{Yiran Chen}.} \bibinfo{year}{2024}\natexlab{}.
\newblock \showarticletitle{Modelguard: Information-theoretic defense against model extraction attacks}. In \bibinfo{booktitle}{\emph{33rd USENIX Security Symposium (Security 2024)}}.
\newblock


\bibitem[Tram{\`e}r et~al\mbox{.}(2016)]%
        {tramer2016stealing}
\bibfield{author}{\bibinfo{person}{Florian Tram{\`e}r}, \bibinfo{person}{Fan Zhang}, \bibinfo{person}{Ari Juels}, \bibinfo{person}{Michael~K Reiter}, {and} \bibinfo{person}{Thomas Ristenpart}.} \bibinfo{year}{2016}\natexlab{}.
\newblock \showarticletitle{Stealing machine learning models via prediction $\{$APIs$\}$}. In \bibinfo{booktitle}{\emph{25th USENIX security symposium (USENIX Security 16)}}. \bibinfo{pages}{601--618}.
\newblock


\bibitem[Vaswani(2017)]%
        {vaswani2017attention}
\bibfield{author}{\bibinfo{person}{A Vaswani}.} \bibinfo{year}{2017}\natexlab{}.
\newblock \showarticletitle{Attention is all you need}.
\newblock \bibinfo{journal}{\emph{Advances in Neural Information Processing Systems}} (\bibinfo{year}{2017}).
\newblock


\bibitem[Waheed et~al\mbox{.}(2024)]%
        {10646643}
\bibfield{author}{\bibinfo{person}{Asim Waheed}, \bibinfo{person}{Vasisht Duddu}, {and} \bibinfo{person}{N. Asokan}.} \bibinfo{year}{2024}\natexlab{}.
\newblock \showarticletitle{GrOVe: Ownership Verification of Graph Neural Networks using Embeddings}. In \bibinfo{booktitle}{\emph{2024 IEEE Symposium on Security and Privacy (SP)}}. \bibinfo{pages}{2460--2477}.
\newblock


\bibitem[Wang et~al\mbox{.}(2021)]%
        {wang2021crossformer}
\bibfield{author}{\bibinfo{person}{W Wang}, \bibinfo{person}{L Yao}, \bibinfo{person}{L Chen}, \bibinfo{person}{B Lin}, \bibinfo{person}{D Cai}, \bibinfo{person}{X He}, {and} \bibinfo{person}{W Liu}.} \bibinfo{year}{2021}\natexlab{}.
\newblock \showarticletitle{CrossFormer: A versatile vision transformer hinging on cross-scale attention. arXiv 2021}.
\newblock \bibinfo{journal}{\emph{arXiv preprint arXiv:2108.00154}} (\bibinfo{year}{2021}).
\newblock


\bibitem[Weng et~al\mbox{.}(2022)]%
        {weng2022mlaas}
\bibfield{author}{\bibinfo{person}{Qizhen Weng}, \bibinfo{person}{Wencong Xiao}, \bibinfo{person}{Yinghao Yu}, \bibinfo{person}{Wei Wang}, \bibinfo{person}{Cheng Wang}, \bibinfo{person}{Jian He}, \bibinfo{person}{Yong Li}, \bibinfo{person}{Liping Zhang}, \bibinfo{person}{Wei Lin}, {and} \bibinfo{person}{Yu Ding}.} \bibinfo{year}{2022}\natexlab{}.
\newblock \showarticletitle{$\{$MLaaS$\}$ in the wild: Workload analysis and scheduling in $\{$Large-Scale$\}$ heterogeneous $\{$GPU$\}$ clusters}. In \bibinfo{booktitle}{\emph{19th USENIX Symposium on Networked Systems Design and Implementation (NSDI 22)}}. \bibinfo{pages}{945--960}.
\newblock


\bibitem[Wu et~al\mbox{.}(2022b)]%
        {wu2022model}
\bibfield{author}{\bibinfo{person}{Bang Wu}, \bibinfo{person}{Xiangwen Yang}, \bibinfo{person}{Shirui Pan}, {and} \bibinfo{person}{Xingliang Yuan}.} \bibinfo{year}{2022}\natexlab{b}.
\newblock \showarticletitle{Model extraction attacks on graph neural networks: Taxonomy and realisation}. In \bibinfo{booktitle}{\emph{Proceedings of the 2022 ACM on Asia conference on computer and communications security}}. \bibinfo{pages}{337--350}.
\newblock


\bibitem[Wu et~al\mbox{.}(2024)]%
        {wu2024securing}
\bibfield{author}{\bibinfo{person}{Bang Wu}, \bibinfo{person}{Xingliang Yuan}, \bibinfo{person}{Shuo Wang}, \bibinfo{person}{Qi Li}, \bibinfo{person}{Minhui Xue}, {and} \bibinfo{person}{Shirui Pan}.} \bibinfo{year}{2024}\natexlab{}.
\newblock \showarticletitle{Securing graph neural networks in mlaas: A comprehensive realization of query-based integrity verification}. In \bibinfo{booktitle}{\emph{2024 IEEE Symposium on Security and Privacy (SP)}}. \bibinfo{pages}{2534--2552}.
\newblock


\bibitem[Wu et~al\mbox{.}(2022a)]%
        {wu2022timesnet}
\bibfield{author}{\bibinfo{person}{Haixu Wu}, \bibinfo{person}{Tengge Hu}, \bibinfo{person}{Yong Liu}, \bibinfo{person}{Hang Zhou}, \bibinfo{person}{Jianmin Wang}, {and} \bibinfo{person}{Mingsheng Long}.} \bibinfo{year}{2022}\natexlab{a}.
\newblock \showarticletitle{Timesnet: Temporal 2d-variation modeling for general time series analysis}.
\newblock \bibinfo{journal}{\emph{arXiv preprint arXiv:2210.02186}} (\bibinfo{year}{2022}).
\newblock


\bibitem[Wu et~al\mbox{.}(2021)]%
        {wu2021autoformer}
\bibfield{author}{\bibinfo{person}{Haixu Wu}, \bibinfo{person}{Jiehui Xu}, \bibinfo{person}{Jianmin Wang}, {and} \bibinfo{person}{Mingsheng Long}.} \bibinfo{year}{2021}\natexlab{}.
\newblock \showarticletitle{Autoformer: Decomposition transformers with auto-correlation for long-term series forecasting}.
\newblock \bibinfo{journal}{\emph{Advances in neural information processing systems}}  \bibinfo{volume}{34} (\bibinfo{year}{2021}), \bibinfo{pages}{22419--22430}.
\newblock


\bibitem[Xiang et~al\mbox{.}(2023)]%
        {xiang2023semi}
\bibfield{author}{\bibinfo{person}{Sheng Xiang}, \bibinfo{person}{Mingzhi Zhu}, \bibinfo{person}{Dawei Cheng}, \bibinfo{person}{Enxia Li}, \bibinfo{person}{Ruihui Zhao}, \bibinfo{person}{Yi Ouyang}, \bibinfo{person}{Ling Chen}, {and} \bibinfo{person}{Yefeng Zheng}.} \bibinfo{year}{2023}\natexlab{}.
\newblock \showarticletitle{Semi-supervised credit card fraud detection via attribute-driven graph representation}. In \bibinfo{booktitle}{\emph{Proceedings of the AAAI Conference on Artificial Intelligence}}, Vol.~\bibinfo{volume}{37}. \bibinfo{pages}{14557--14565}.
\newblock


\bibitem[Yu et~al\mbox{.}(2020)]%
        {yu2020cloudleak}
\bibfield{author}{\bibinfo{person}{Honggang Yu}, \bibinfo{person}{Kaichen Yang}, \bibinfo{person}{Teng Zhang}, \bibinfo{person}{Yun-Yun Tsai}, \bibinfo{person}{Tsung-Yi Ho}, {and} \bibinfo{person}{Yier Jin}.} \bibinfo{year}{2020}\natexlab{}.
\newblock \showarticletitle{CloudLeak: Large-Scale Deep Learning Models Stealing Through Adversarial Examples.}. In \bibinfo{booktitle}{\emph{NDSS}}, Vol.~\bibinfo{volume}{38}. \bibinfo{pages}{102}.
\newblock


\bibitem[Zhang et~al\mbox{.}(2022)]%
        {zhang2022information}
\bibfield{author}{\bibinfo{person}{Wentao Zhang}, \bibinfo{person}{Yexin Wang}, \bibinfo{person}{Zhenbang You}, \bibinfo{person}{Meng Cao}, \bibinfo{person}{Ping Huang}, \bibinfo{person}{Jiulong Shan}, \bibinfo{person}{Zhi Yang}, {and} \bibinfo{person}{Bin Cui}.} \bibinfo{year}{2022}\natexlab{}.
\newblock \showarticletitle{Information gain propagation: a new way to graph active learning with soft labels}.
\newblock \bibinfo{journal}{\emph{arXiv preprint arXiv:2203.01093}} (\bibinfo{year}{2022}).
\newblock


\bibitem[Zhang et~al\mbox{.}(2021c)]%
        {zhang2021grain}
\bibfield{author}{\bibinfo{person}{Wentao Zhang}, \bibinfo{person}{Zhi Yang}, \bibinfo{person}{Yexin Wang}, \bibinfo{person}{Yu Shen}, \bibinfo{person}{Yang Li}, \bibinfo{person}{Liang Wang}, {and} \bibinfo{person}{Bin Cui}.} \bibinfo{year}{2021}\natexlab{c}.
\newblock \showarticletitle{Grain: Improving data efficiency of graph neural networks via diversified influence maximization}.
\newblock \bibinfo{journal}{\emph{arXiv preprint arXiv:2108.00219}} (\bibinfo{year}{2021}).
\newblock


\bibitem[Zhang et~al\mbox{.}(2021b)]%
        {zhang2021graph}
\bibfield{author}{\bibinfo{person}{Xiao-Meng Zhang}, \bibinfo{person}{Li Liang}, \bibinfo{person}{Lin Liu}, {and} \bibinfo{person}{Ming-Jing Tang}.} \bibinfo{year}{2021}\natexlab{b}.
\newblock \showarticletitle{Graph neural networks and their current applications in bioinformatics}.
\newblock \bibinfo{journal}{\emph{Frontiers in genetics}}  \bibinfo{volume}{12} (\bibinfo{year}{2021}), \bibinfo{pages}{690049}.
\newblock


\bibitem[Zhang et~al\mbox{.}(2021a)]%
        {zhang2021seat}
\bibfield{author}{\bibinfo{person}{Zhanyuan Zhang}, \bibinfo{person}{Yizheng Chen}, {and} \bibinfo{person}{David Wagner}.} \bibinfo{year}{2021}\natexlab{a}.
\newblock \showarticletitle{SEAT: Similarity encoder by adversarial training for detecting model extraction attack queries}. In \bibinfo{booktitle}{\emph{Proceedings of the 14th ACM Workshop on artificial intelligence and security}}. \bibinfo{pages}{37--48}.
\newblock


\bibitem[Zhou et~al\mbox{.}(2021)]%
        {zhou2021informer}
\bibfield{author}{\bibinfo{person}{Haoyi Zhou}, \bibinfo{person}{Shanghang Zhang}, \bibinfo{person}{Jieqi Peng}, \bibinfo{person}{Shuai Zhang}, \bibinfo{person}{Jianxin Li}, \bibinfo{person}{Hui Xiong}, {and} \bibinfo{person}{Wancai Zhang}.} \bibinfo{year}{2021}\natexlab{}.
\newblock \showarticletitle{Informer: Beyond efficient transformer for long sequence time-series forecasting}. In \bibinfo{booktitle}{\emph{Proceedings of the AAAI conference on artificial intelligence}}, Vol.~\bibinfo{volume}{35}. \bibinfo{pages}{11106--11115}.
\newblock


\end{thebibliography}

\appendix

\section{Proofs}
\begin{theorem}
  Consider a covering graph $\mathcal{D}$ in the graph $\mathcal{G}=(\mathcal{V},\mathcal{E})$, aiming to cover at least $\beta\in[0,1]$ percent nodes of $\mathcal{G}$, while minimizing $\sum_{u\in \mathcal{A}}w(u)$, where $w(u)$ is the weight of node $u$ and $\mathcal{A}$ represents the set of nodes not being covered, then the maximum covering percentage is given by \begin{align}\beta\le \min\{\frac{\abs{\mathcal{D}}-\frac{W}{\Bar{w_{\mathcal{A}}}}}{\frac{n}{\delta}-\frac{W}{\Bar{w_{\mathcal{A}}}}}, \frac{\abs{\mathcal{D}}\cdot \delta}{n}\}.\end{align} Here, $\abs{\mathcal{D}}$ represents the number of nodes in $\mathcal{D}$, $W$ is the total weight in $\mathcal{G}$, $\Bar{w_{\mathcal{A}}}$ represents the average weight in $\mathcal{A}$ and $\delta$ is the smallest degree for nodes in $\mathcal{D}$.
\end{theorem}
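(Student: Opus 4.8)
The plan is to reduce the statement to a pair of counting inequalities over the covered and uncovered node sets and then solve the resulting relation for $\beta$. First I would fix notation: write $n=\abs{\mathcal{V}}$, let the covered set have size $\beta n$, and let the uncovered set be $\mathcal{A}$ with $\abs{\mathcal{A}}=(1-\beta)n$, so that by definition $\Bar{w_{\mathcal{A}}}=W_{\mathcal{A}}/\abs{\mathcal{A}}$ where $W_{\mathcal{A}}=\sum_{u\in\mathcal{A}}w(u)$. The total weight then splits as $W=W_{\mathcal{A}}+W_{\mathcal{C}}$ over uncovered and covered nodes. These identities convert every quantity appearing in the statement ($\abs{\mathcal{D}}$, $\delta$, $W$, $\Bar{w_{\mathcal{A}}}$, $n$) into a common language of node counts and weights.

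Second, I would establish the degree-based coverage bound that yields the second term of the minimum. Every covered node lies in the closed neighborhood of $\mathcal{D}$, so counting node--dominator incidences bounds the number of covered nodes by the total reach of $\mathcal{D}$. Treating the smallest degree $\delta$ as the effective per-node reach gives $\beta n\le \abs{\mathcal{D}}\cdot\delta$, i.e.\ $\beta\le \abs{\mathcal{D}}\delta/n$ after dividing by $n$. The delicate point here is that $\delta$ is the \emph{minimum} degree, so I must argue the counting remains valid (handling neighborhood overlaps and the gap between minimum and typical degree) rather than naively summing maximal neighborhoods.

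Third, for the first term of the minimum I would couple the weight-minimization objective to the coverage count. The identity $\abs{\mathcal{A}}=W_{\mathcal{A}}/\Bar{w_{\mathcal{A}}}$ together with $n=\beta n+\abs{\mathcal{A}}$ and the coverage relation $\beta n\le\abs{\mathcal{D}}\delta$ produces a single relation that is \emph{linear in} $\beta$ once $W_{\mathcal{A}}$ is eliminated using $W_{\mathcal{A}}\le W$ and the average-weight identity. Concretely I would aim to reach a relation of the form $\abs{\mathcal{D}}\le \beta\,\tfrac{n}{\delta}+(1-\beta)\tfrac{W}{\Bar{w_{\mathcal{A}}}}$, and then solve it for $\beta$. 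Writing $m=W/\Bar{w_{\mathcal{A}}}$, this rearranges to $\abs{\mathcal{D}}-m\le \beta(\tfrac{n}{\delta}-m)$, and isolating $\beta$ gives precisely $\beta\le (\abs{\mathcal{D}}-m)/(\tfrac{n}{\delta}-m)$. Finally, since $\beta$ satisfies both bounds simultaneously, it is at most their minimum, which is the claimed expression.

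The main obstacle will be step three: dividing through by $\tfrac{n}{\delta}-m$ to isolate $\beta$ requires tracking the \emph{sign} of this factor, because the weight-minimization objective drives $\Bar{w_{\mathcal{A}}}$ small and hence $m=W/\Bar{w_{\mathcal{A}}}$ large, which can flip the inequality direction. I would therefore need to identify the regime in which the bound is the intended nontrivial one (so that the quotient lands in $[0,1]$ rather than collapsing to the trivial $\beta\le 1$), and confirm that the linear coupling holds with the orientation that yields an upper rather than a lower bound after division. Pinning down the exact constant in the coupling step---reconciling $W_{\mathcal{A}}\le W$ with the substitution for $\Bar{w_{\mathcal{A}}}$---is the part I expect to demand the most care.
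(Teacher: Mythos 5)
Your plan follows essentially the same route as the paper's proof: the second term of the minimum comes from the counting bound $\abs{\mathcal{D}}\cdot\delta\ge\beta n$, and the first term comes from exactly the linear relation $\abs{\mathcal{D}}\le \beta\tfrac{n}{\delta}+(1-\beta)\tfrac{W}{\Bar{w_{\mathcal{A}}}}$ (the paper writes it as $\abs{\mathcal{D}}\le \tfrac{\beta n}{\delta}+\tfrac{W_{\mathcal{A}}}{\Bar{w_{\mathcal{A}}}}$ with $W_{\mathcal{A}}\le(1-\beta)n\Bar{w_{\mathcal{V}}}$), solved for $\beta$. The sign issue you flag is precisely what the paper handles by splitting on whether $\Bar{w_{\mathcal{V}}}/\Bar{w_{\mathcal{A}}}$ exceeds $1/\delta$ before taking the minimum, so your proposal is aligned with the paper's argument, including its delicate points.
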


\begin{proof}
Since $\mathcal{D}$ at least covers $\beta$ of $\mathcal{G}$, we then get
\begin{equation}
     \abs{\mathcal{D}}\cdot\delta\ge \beta\cdot n.
\end{equation}
Also, 
\begin{equation}
    \abs{\mathcal{D}}\le \frac{\beta\cdot n}{\delta}+\frac{W_{\mathcal{A}}}{\Bar{w_{\mathcal{A}}}},
\end{equation}where $W_{\mathcal{A}}\le(1-\beta)\cdot n\cdot\Bar{w_{\mathcal{V}} }$ is defined as the total weight of $\mathcal{A}$, as $\Bar{w_{v}}$ representing the total average weight in $V$, $\Bar{w_{\mathcal{A}}}$ is the average weight in $A$. By (23) and (24), we directly get \begin{equation}
    \beta\le \frac{\abs{\mathcal{D}}\cdot \delta}{n}.
\end{equation}By solving (25) and $W_{\mathcal{A}}$ simultaneously, we obtain:\begin{align}
\abs{\mathcal{D}}-n\frac{\Bar{w_{\mathcal{V}}}}{\Bar{w_{\mathcal{A}}}}\le n\beta (\frac{1}{\delta}-\frac{\Bar{w_{\mathcal{V}}}}{\Bar{w_{\mathcal{A}}}})    
\end{align}
Then if $\frac{\Bar{w_{\mathcal{V}} }}{\Bar{w_{\mathcal{A}}}}> \frac{1}{\delta}$, \begin{equation}
    \beta\le \frac{\frac{\abs{\mathcal{D}}\cdot \delta}{n}-\delta\frac{\Bar{w_{\mathcal{V}} }}{\Bar{w_{\mathcal{A}}}}}{1-\delta\frac{\Bar{w_{\mathcal{V}} }}{\Bar{w_{\mathcal{A}}}}},
\end{equation}otherwise, i.e., $0<\frac{\Bar{w_{\mathcal{V}} }}{\Bar{w_{\mathcal{A}}}}< \frac{1}{\delta}$, we have \begin{equation}
    \beta\ge \frac{\frac{\abs{\mathcal{D}}\cdot \delta}{n}-\delta\frac{\Bar{w_{\mathcal{V}} }}{\Bar{w_{\mathcal{A}}}}}{1-\delta\frac{\Bar{w_{\mathcal{V}} }}{\Bar{w_{\mathcal{A}}}}}.
\end{equation}Thus, by (25), (27) and (28), \begin{align}
\beta\le \min\{\frac{\frac{\abs{\mathcal{D}}}{n}-\frac{\Bar{w_{\mathcal{V}} }}{\Bar{w_{\mathcal{A}}}}}{\frac{1}{\delta}-\frac{\Bar{w_{\mathcal{V}} }}{\Bar{w_{\mathcal{A}}}}}, \frac{\abs{\mathcal{D}}\cdot \delta}{n}\}.
\end{align}Introducing $\frac{\Bar{w_{\mathcal{V}} }}{\Bar{w_{\mathcal{A}}}}=\frac{\Bar{w_{\mathcal{V}} }n}{\Bar{w_{\mathcal{A}}}n}=\frac{W}{\Bar{w_{\mathcal{A}}}n}$, we finally finish the proof \begin{align}
\beta\le \min\{\frac{\abs{\mathcal{D}}-\frac{W}{\Bar{w_{\mathcal{A}}}}}{\frac{n}{\delta}-\frac{W}{\Bar{w_{\mathcal{A}}}}}, \frac{\abs{\mathcal{D}}\cdot \delta}{n}\}.
\end{align}
\end{proof}

\begin{proposition}
     Consider a changing graph $\mathcal{D}_{t-1}$ and $\mathcal{D}_{t}$ in the graph $\mathcal{G}=(\mathcal{V},\mathcal{E})$, where $\mathcal{D}_{t-1}\subset \mathcal{D}_{t}\subset \mathcal{G}$, achieving at least $\beta_{t-1}$ covering rate with the lowest degree $\delta_{t-1}$ and at most $\beta_{t}$ covering rate with the lowest degree $\delta_{t}$, respectively. Also, suppose that $A_{t-1}$ and $A_{t}$ represent the set of nodes that are not covered, with the corresponding weight $W_{{\mathcal{A}}_{t-1}}$, $W_{{\mathcal{A}}_{t}}$ and the average weight $\Bar{w_{A_{t}}}$, $\Bar{w_{A_{t-1}}}$. We then get \begin{align}
    \frac{\Delta W_{\mathcal{A}}}{\Delta\abs{\mathcal{D}}}\le\frac{(\beta_{t}-\beta_{t-1}) \cdot W}{\abs{\mathcal{D}_{t-1}}(1-\frac{\delta_{t-1}}{\delta_{t}})},
    \end{align}and $\delta_{t}>\delta_{t-1}$. Here, $\abs{\mathcal{D}_{t-1}},\abs{\mathcal{D}_{t}}$ represents the number of nodes in $\mathcal{D}_{t-1},\mathcal{D}_{t}$, $W$ is the total weight in $\mathcal{G}$.
\end{proposition}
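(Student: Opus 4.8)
The plan is to bound the numerator $\Delta W_{\mathcal{A}} = W_{\mathcal{A}_{t-1}} - W_{\mathcal{A}_t}$ from above and the denominator $\Delta\abs{\mathcal{D}} = \abs{\mathcal{D}_t} - \abs{\mathcal{D}_{t-1}}$ from below, and then divide. Since $\mathcal{D}_{t-1}\subset\mathcal{D}_t$, both quantities are nonnegative, so once I establish $\Delta W_{\mathcal{A}}\le (\beta_t-\beta_{t-1})W$ and $\Delta\abs{\mathcal{D}}\ge \abs{\mathcal{D}_{t-1}}(1-\delta_{t-1}/\delta_t)$, forming the quotient of the first over the second immediately yields the claimed inequality.

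For the numerator I would reuse the weight estimate $W_{\mathcal{A}}\le (1-\beta)\,n\,\Bar{w_{\mathcal{V}}}$ from the proof of Theorem~\ref{1}, together with $n\,\Bar{w_{\mathcal{V}}}=W$, which gives $W_{\mathcal{A}}\le (1-\beta)W$. Applying this at the two time steps uses the two coverage hypotheses in opposite directions: because $\mathcal{D}_{t-1}$ covers at least $\beta_{t-1}$, its uncovered weight satisfies $W_{\mathcal{A}_{t-1}}\le (1-\beta_{t-1})W$, while because $\mathcal{D}_t$ covers at most $\beta_t$, the uncovered set retains at least the $(1-\beta_t)$ fraction of the total weight, i.e. $W_{\mathcal{A}_t}\ge (1-\beta_t)W$. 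Subtracting collapses the constant terms and leaves $\Delta W_{\mathcal{A}}\le (\beta_t-\beta_{t-1})W$. The modelling assumption entering here is that the uncovered nodes carry at least the average weight, which is exactly the adversarial behaviour (targeting high-weight nodes) that motivates the analysis.

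For the denominator I would invoke the covering constraint $\abs{\mathcal{D}}\cdot\delta\ge\beta\,n$ used in the proof of Theorem~\ref{1}, applied at both steps. At $t-1$ this gives $\abs{\mathcal{D}_{t-1}}\ge\beta_{t-1}n/\delta_{t-1}$, and combining the covering requirement at time $t$ with the nesting $\mathcal{D}_{t-1}\subset\mathcal{D}_t$ and the growth $\delta_t>\delta_{t-1}$ lets me lower-bound $\abs{\mathcal{D}_t}$ so that the increment exceeds $\abs{\mathcal{D}_{t-1}}(1-\delta_{t-1}/\delta_t)$. Note that this factor is positive precisely because $\delta_t>\delta_{t-1}$, which is why the monotonicity of the minimum degree appears as an explicit hypothesis.

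I expect this denominator bound to be the main obstacle. The numerator estimate is essentially a two-line consequence of Theorem~\ref{1}, but extracting a clean lower bound on $\Delta\abs{\mathcal{D}}$ is delicate, because the two coverage conditions point in opposite directions (``at least $\beta_{t-1}$'' versus ``at most $\beta_t$'') and the role of $\delta$ as a per-node coverage factor must be used consistently with its appearance in the covering constraint. The argument must ensure that the added nodes $\mathcal{D}_t\setminus\mathcal{D}_{t-1}$, forced to respect the higher minimum degree $\delta_t$, contribute enough new vertices to the cover to produce the factor $1-\delta_{t-1}/\delta_t$; making this counting rigorous rather than heuristic is the crux of the proof.
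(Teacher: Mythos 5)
Your skeleton is exactly the paper's: bound the numerator by $\Delta W_{\mathcal{A}}\le(\beta_t-\beta_{t-1})W$ via $W_{\mathcal{A}_{t-1}}\le(1-\beta_{t-1})n\Bar{w_{\mathcal{V}}}$ and $W_{\mathcal{A}_t}\ge(1-\beta_t)n\Bar{w_{\mathcal{V}}}$, bound the increment $\Delta\abs{\mathcal{D}}$ below by $\abs{\mathcal{D}_{t-1}}(1-\delta_{t-1}/\delta_t)$, and divide. The numerator half is complete and matches the paper line for line.

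The gap is the denominator bound, and you have correctly diagnosed that it is the crux --- but you have not actually produced it, and it cannot be produced from the ingredients you list. The two coverage hypotheses run the wrong way for this purpose: ``$\mathcal{D}_{t-1}$ covers at least $\beta_{t-1}$'' yields $\abs{\mathcal{D}_{t-1}}\ge\beta_{t-1}n/\delta_{t-1}$ (a lower bound on the \emph{smaller} set) and ``$\mathcal{D}_t$ covers at most $\beta_t$'' yields $\abs{\mathcal{D}_t}\le\beta_t n/\delta_t$ (an upper bound on the \emph{larger} set), so together with the nesting they only give the \emph{upper} bound $\Delta\abs{\mathcal{D}}\le n(\beta_t/\delta_t-\beta_{t-1}/\delta_{t-1})$; no lower bound on the increment follows, and $\delta_t>\delta_{t-1}$ alone does not force $\mathcal{D}_t\setminus\mathcal{D}_{t-1}$ to contain many vertices. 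The paper closes this hole by importing an additional growth assumption, explicitly writing ``assuming $\abs{\mathcal{D}_t}\cdot\delta_t\ge\abs{\mathcal{D}_{t-1}}\cdot\delta_{t-1}$'' in the middle of the proof (a hypothesis that appears nowhere in the proposition statement), and reads off $\abs{\mathcal{D}_{t-1}}(1-\delta_{t-1}/\delta_t)\le\abs{\mathcal{D}_t}-\abs{\mathcal{D}_{t-1}}$ from it. So to finish your argument you would have to either posit the same kind of auxiliary condition relating $\abs{\mathcal{D}_t}/\abs{\mathcal{D}_{t-1}}$ to $\delta_t/\delta_{t-1}$, or find a genuinely new counting argument for the added vertices; the ``delicate counting'' you defer is not a technicality but a missing hypothesis, and your proof stalls exactly where you predicted it would.
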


\begin{proof}
Still, we directly have
\begin{align}
    \left\{\begin{array}{l} 
  \abs{\mathcal{D}_{t-1}}\cdot \delta_{t-1}\ge \beta_{t-1}\cdot n,  \\
  \abs{\mathcal{D}_{t}}\cdot \delta_{t}\le \beta_{t}\cdot n, 
\end{array}
    \right.
\end{align}and \begin{align}
    \left\{\begin{array}{l} 
  W_{{\mathcal{A}}_{t-1}}\le (1-\beta_{t-1})\cdot n\cdot \Bar{w_{\mathcal{V}} },  \\
  W_{{\mathcal{A}}_{t}}\ge (1-\beta_{t})\cdot n\cdot \Bar{w_{\mathcal{V}} }.
  \end{array}
    \right.
\end{align}By solving (33), we obtain:\begin{equation}
    0< W_{{\mathcal{A}}_{t-1}}-W_{{\mathcal{A}}_{t}}\le (\beta_{t}-\beta_{t-1})\cdot n\cdot \Bar{w_{\mathcal{V}} }.
\end{equation}

From (32) and assuming $\abs{\mathcal{D}_{t}}\cdot \delta_{t}\ge \abs{\mathcal{D}_{t-1}}\cdot \delta_{t-1}$, we have\begin{equation}
    \abs{\mathcal{D}_{t-1}}(1-\frac{\delta_{t-1}}{\delta_{t}})\le \abs{\mathcal{D}_{t}}-\abs{\mathcal{D}_{t-1}}\le n\cdot(\frac{\beta_{t}}{\delta_{t}}-\frac{\beta_{t-1}}{\delta_{t-1}})
\end{equation}By (34) and (35), we get\begin{equation}
    \frac{W_{{\mathcal{A}}_{t-1}}-W_{{\mathcal{A}}_{t}}}{\abs{\mathcal{D}_{t}}-\abs{\mathcal{D}_{t-1}}}\le \frac{(\beta_{t}-\beta_{t-1}) \cdot W}{\abs{\mathcal{D}_{t-1}}(1-\frac{\delta_{t-1}}{\delta_{t}})},
\end{equation}that is, \begin{equation}
    \frac{\Delta W_{\mathcal{A}}}{\Delta\abs{\mathcal{D}}}\le\frac{(\beta_{t}-\beta_{t-1}) \cdot W}{\abs{\mathcal{D}_{t-1}}(1-\frac{\delta_{t-1}}{\delta_{t}})},
\end{equation}which requires that\begin{align}
\delta_{t}>\delta_{t-1}.
\end{align}

\end{proof}

\begin{proposition}
     Consider changing graphs $\mathcal{D}_{t-1},\mathcal{D}_{t}$ and $\mathcal{D}_{t+1}$ in the graph $\mathcal{G}=(\mathcal{V},\mathcal{E})$, where $\mathcal{D}_{t-1}\subset \mathcal{D}_{t}\subset \mathcal{D}_{t+1}\subset \mathcal{G}$, achieving at least $\beta_{t-1}$ covering rate with the lowest degree $\delta_{t-1}$ and at most $\beta_{t}$ covering rate with the lowest degree $\delta_{t}$. Also, there is at least $\beta_{t}'$ covering rate and at most $\beta_{t+1}$ covering rate at time step $t$ and $t+1$. Suppose that $\mathcal{A}_{t-1},\mathcal{A}_{t}$ and $\mathcal{A}_{t+1}$ represent the set of nodes that are not covered, with the corresponding weight $W_{\mathcal{A}_{t-1}},W_{\mathcal{A}_{t}}$, $W_{\mathcal{A}_{t+1}}$ and the average weight $\Bar{w_{\mathcal{A}_{t-1}}},\Bar{w_{\mathcal{A}_{t}}}$, $\Bar{w_{\mathcal{A}_{t+1}}}$. We then get \begin{align}
    \frac{\Delta^2 W_{\mathcal{A}}}{\Delta^2\abs{\mathcal{D}}}\ge \left|\frac{W_{d}}{n}\frac{\Delta \delta_{t}}{\Delta \beta_{t}}-\frac{W\delta_{t+1}}{\abs{\mathcal{D}_{t}}}\frac{\Delta \beta_{t+1}}{\Delta \delta_{t+1}}\right|,
    \end{align}and $\delta_{t+1}>\delta_{t}>\delta_{t-1}$. Here, $\abs{\mathcal{D}_{t-1}},\abs{\mathcal{D}_{t}}$ and $\abs{\mathcal{D}_{t+1}}$ represent the number of nodes in $\mathcal{D}_{t-1},\mathcal{D}_{t}$ and $\mathcal{D}_{t+1}$, $W$ is the total weight in $\mathcal{G}$, assuming $W_{\mathcal{A}_{t-1}}-W_{\mathcal{A}_{t}}\ge W_{d}$.
\end{proposition}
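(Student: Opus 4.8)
The plan is to lift the two-step argument of Proposition~\ref{2} to the three-step window $\mathcal{D}_{t-1}\subset\mathcal{D}_t\subset\mathcal{D}_{t+1}$ and then compare the resulting first-order ratios. First I would instantiate, on \emph{each} of the two consecutive intervals, the covering and weight inequalities already used in Proposition~\ref{2}: for $[t-1,t]$ the relations $\abs{\mathcal{D}_{t-1}}\delta_{t-1}\ge\beta_{t-1}n$ and $\abs{\mathcal{D}_t}\delta_t\le\beta_t n$ together with $W_{\mathcal{A}_{t-1}}\le(1-\beta_{t-1})n\bar{w}_{\mathcal{V}}$ and $W_{\mathcal{A}_t}\ge(1-\beta_t)n\bar{w}_{\mathcal{V}}$, and the analogous quadruple for $[t,t+1]$ built from $\beta_t'$ and $\beta_{t+1}$. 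This produces a first-order ratio $\frac{\Delta W_{\mathcal{A}}^{(t)}}{\Delta\abs{\mathcal{D}}^{(t)}}$ on the earlier interval and a second ratio $\frac{\Delta W_{\mathcal{A}}^{(t+1)}}{\Delta\abs{\mathcal{D}}^{(t+1)}}$ on the later one, which are exactly the objects whose gap I read as the second-order quantity $\frac{\Delta^2 W_{\mathcal{A}}}{\Delta^2\abs{\mathcal{D}}}=\bigl|\frac{\Delta W_{\mathcal{A}}^{(t)}}{\Delta\abs{\mathcal{D}}^{(t)}}-\frac{\Delta W_{\mathcal{A}}^{(t+1)}}{\Delta\abs{\mathcal{D}}^{(t+1)}}\bigr|$.

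Next I would bound the two ratios in \emph{opposite} directions, which is what forces the reciprocal structure of the target bound. For the earlier interval I would lower-bound its ratio: the numerator is controlled from below by the standing assumption $W_{\mathcal{A}_{t-1}}-W_{\mathcal{A}_t}\ge W_d$, while the denominator $\abs{\mathcal{D}_t}-\abs{\mathcal{D}_{t-1}}$ is controlled from above by telescoping the product relation $\abs{\mathcal{D}}\delta\approx\beta n$ into increments, isolating $\Delta\abs{\mathcal{D}}^{(t)}$ and discarding the nonnegative cross term; dividing then yields the first term $\frac{W_d}{n}\frac{\Delta\delta_t}{\Delta\beta_t}$. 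For the later interval I would instead upper-bound its ratio: the numerator $W_{\mathcal{A}_t}-W_{\mathcal{A}_{t+1}}$ is at most $\Delta\beta_{t+1}\,W$ by the weight inequalities with $W=n\bar{w}_{\mathcal{V}}$, while the denominator is bounded below by $\frac{\abs{\mathcal{D}_t}\,\Delta\delta_{t+1}}{\delta_{t+1}}$, giving the second term $\frac{W\delta_{t+1}}{\abs{\mathcal{D}_t}}\frac{\Delta\beta_{t+1}}{\Delta\delta_{t+1}}$. The hypothesis $\delta_{t+1}>\delta_t>\delta_{t-1}$ keeps every increment $\Delta\delta$ positive, so each denominator retains a fixed sign.

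Finally I would combine the two one-sided bounds through the reverse triangle inequality $|a-b|\ge\bigl||a|-|b|\bigr|$: the earlier ratio is at least the first term and the later ratio is at most the second, so the absolute difference is at least the absolute difference of the two terms, which is precisely the claimed right-hand side. The step I expect to be the main obstacle is the sign and ordering bookkeeping in this last move, since lower-bounding $a$ and upper-bounding $b$ does not by itself lower-bound $|a-b|$ unless one knows which bound dominates; the outer absolute value in the statement is what lets the argument avoid a case split, but it must be justified rather than merely asserted. A secondary difficulty is the intermediate control of $\Delta\abs{\mathcal{D}}$: the telescoping of $\abs{\mathcal{D}}\delta=\beta n$ most naturally produces a factor $1/\delta_t$, so recovering the $1/\Delta\delta_t$ (respectively $1/\Delta\delta_{t+1}$) form appearing in the statement requires fixing at the outset which increment term is absorbed and propagating that choice consistently across both intervals.
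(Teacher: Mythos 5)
Your proposal follows essentially the same route as the paper's proof: instantiate Proposition~2's covering and weight inequalities on each of the two consecutive intervals, lower-bound the earlier first-order ratio via $W_{d}$ and the upper bound on $\abs{\mathcal{D}_{t}}-\abs{\mathcal{D}_{t-1}}$, upper-bound the later ratio, rewrite $\abs{\mathcal{D}_{t}}(1-\delta_{t}/\delta_{t+1})=\abs{\mathcal{D}_{t}}\Delta\delta_{t+1}/\delta_{t+1}$ and $\beta_{t}/\delta_{t}-\beta_{t-1}/\delta_{t-1}\le\Delta\beta_{t}/\Delta\delta_{t}$, and combine inside the outer absolute value. The combination step you flag as the main obstacle (a lower bound on $a$ and an upper bound on $b$ do not in general yield a lower bound on $\abs{a-b}$) is indeed the weak point, but the paper does not resolve it either --- it simply invokes the ``Triangle Inequality'' --- so your attempt matches the published argument, including that unjustified step.
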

\begin{proof}
    By the discussion from Proposition A.2, we have \begin{align}
    \frac{W_{{\mathcal{A}}_{t}}-W_{{\mathcal{A}}_{t+1}}}{\abs{\mathcal{D}_{t+1}}-\abs{\mathcal{D}_{t}}}\le \frac{(\beta_{t+1}-\beta_{t}') \cdot W}{\abs{\mathcal{D}_{t}}(1-\frac{\delta_{t}}{\delta_{t+1}})}.
    \end{align}Assuming that we have known $W_{{\mathcal{A}}_{t-1}}-W_{{\mathcal{A}}_{t}}\ge W_{d}$, then we get\begin{align}
    \frac{W_{{\mathcal{A}}_{t-1}}-W_{{\mathcal{A}}_{t}}}{\abs{\mathcal{D}_{t}}-\abs{\mathcal{D}_{t-1}}}\ge \frac{W_{d}}{n\cdot(\frac{\beta_{t}}{\delta_{t}}-\frac{\beta_{t-1}}{\delta_{t-1}})}.
    \end{align}The difference between two inequality is given by \begin{equation}
        \frac{\Delta^2 W_{\mathcal{A}}}{\Delta^2\abs{\mathcal{D}}}=\abs{\frac{W_{{\mathcal{A}}_{t-1}}-W_{{\mathcal{A}}_{t}}}{\abs{\mathcal{D}_{t}}-\abs{\mathcal{D}_{t-1}}}-\frac{W_{{\mathcal{A}}_{t}}-W_{{\mathcal{A}}_{t+1}}}{\abs{\mathcal{D}_{t+1}}-\abs{\mathcal{D}_{t}}}},
    \end{equation}which is restrict by \begin{align}
    \frac{\Delta^2 W_{\mathcal{A}}}{\Delta^2\abs{\mathcal{D}}}\ge \abs{\frac{W_{d}}{n\cdot(\frac{\beta_{t}}{\delta_{t}}-\frac{\beta_{t-1}}{\delta_{t-1}})}-\frac{(\beta_{t+1}-\beta_{t}') \cdot W}{\abs{\mathcal{D}_{t}}(1-\frac{\delta_{t}}{\delta_{t+1}})}}.
    \end{align}Observe that $\beta_{t}\frac{\delta_{t-1}}{\delta_{t}}+\beta_{t-1}\frac{\delta_{t}}{\delta_{t-1}}>2\beta_{t-1}$, then\begin{equation}
        \frac{\Delta^2 W_{\mathcal{A}}}{\Delta^2\abs{\mathcal{D}}}\ge \abs{\frac{W_{d}(\delta_{t}-\delta_{t-1})}{n(\beta_{t}-\beta_{t-1})}-\frac{W\delta_{t+1}(\beta_{t+1}-\beta_{t}')}{\abs{\mathcal{D}_{t}}(\delta_{t+1}-\delta_{t})}}.
    \end{equation}Introduce $\Delta \beta_{t}=\beta_{t}-\beta_{t-1}$ and $\Delta \delta_{t}=\delta_{t}-\delta_{t-1}$. We get \begin{equation}
        \frac{\Delta^2 W_{\mathcal{A}}}{\Delta^2\abs{\mathcal{D}}}\ge \abs{\frac{W_{d}\Delta \delta_{t}}{n\Delta \beta_{t}}-\frac{W\delta_{t+1}\Delta \beta_{t+1}}{\abs{\mathcal{D}_{t}}\Delta \delta_{t+1}}}.
    \end{equation}By Triangle Inequality, we finally have \begin{align}
    \frac{\Delta^2 W_{\mathcal{A}}}{\Delta^2\abs{\mathcal{D}}}\ge \left|\frac{W_{d}}{n}\frac{\Delta \delta_{t}}{\Delta \beta_{t}}-\frac{W\delta_{t+1}}{\abs{\mathcal{D}_{t}}}\frac{\Delta \beta_{t+1}}{\Delta \delta_{t+1}}\right|.
    \end{align}
\end{proof}
\begin{theorem}
    Consider changing graphs $\mathcal{D}_{t-1},\mathcal{D}_{t}$ and $\mathcal{D}_{t+1}$ with graph $\mathcal{G}=(\mathcal{V},\mathcal{E})$, where $\mathcal{D}_{t-1}\subset \mathcal{D}_{t}\subset \mathcal{D}_{t+1}\subset \mathcal{G}$, aiming at achieving at least $\beta_{t-1}$ covering rate with the lowest degree $\delta_{t-1}$ and at most $\beta_{t}$ covering rate with the lowest degree $\delta_{t}$. Also, there is at least $\beta_{t}'$ covering rate and at most $\beta_{t+1}$ covering rate at time step $t$ and $t+1$. Suppose that $\mathcal{A}_{t-1},\mathcal{A}_{t}$ and $\mathcal{A}_{t+1}$ represent the set of nodes that are not covered, with the corresponding weight $W_{\mathcal{A}_{t-1}},W_{\mathcal{A}_{t}}$, $W_{\mathcal{A}_{t+1}}$ and the average weight $\Bar{w_{\mathcal{A}_{t-1}}},\Bar{w_{\mathcal{A}_{t}}}$, $\Bar{w_{\mathcal{A}_{t+1}}}$. The second-order differences become essential , that is, $\frac{\Delta^2 W_{\mathcal{A}}}{\Delta^2\abs{\mathcal{D}}}\ge \frac{\Delta W_{\mathcal{A}}}{\Delta\abs{\mathcal{D}}}$ holds when \begin{align}
    W_{d}\ge \frac{n\delta_{t+1}\Delta \beta_{t}}{\abs{\mathcal{D}_{t-1}}\Delta \delta_{t}}(\frac{\Delta \beta_{t}}{\Delta \delta_{t}}+\frac{\Delta \beta_{t+1}}{\Delta \delta_{t+1}})W,
    \end{align}and $\delta_{t+1}>\delta_{t}>\delta_{t-1}$. Here, $\abs{\mathcal{D}_{t-1}},\abs{\mathcal{D}_{t}}$ and $\abs{\mathcal{D}_{t+1}}$ represent the number of nodes in $\mathcal{D}_{t-1},\mathcal{D}_{t}$ and $\mathcal{D}_{t+1}$, $W$ is the total weight in $\mathcal{G}$, assuming $W_{\mathcal{A}_{t-1}}-W_{\mathcal{A}_{t}}\ge W_{d}$.
\end{theorem}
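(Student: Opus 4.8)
The plan is to chain the two bounds already established in Propositions \ref{2} and \ref{3}, so that the desired comparison $\frac{\Delta^2 W_{\mathcal{A}}}{\Delta^2\abs{\mathcal{D}}}\ge \frac{\Delta W_{\mathcal{A}}}{\Delta\abs{\mathcal{D}}}$ reduces to a purely algebraic inequality in $W_d$. Concretely, Proposition \ref{3} supplies a lower bound on the left-hand side and Proposition \ref{2} supplies an upper bound on the right-hand side, so it suffices to prove that the lower bound dominates the upper bound. Rewriting the first-order upper bound with $1-\frac{\delta_{t-1}}{\delta_t}=\frac{\Delta\delta_t}{\delta_t}$, the right-hand side becomes $R:=\frac{\Delta\beta_t\, W\,\delta_t}{\abs{\mathcal{D}_{t-1}}\Delta\delta_t}$, while Proposition \ref{3} gives that the left-hand side is at least $\abs{P-Q}$ with $P=\frac{W_d}{n}\frac{\Delta\delta_t}{\Delta\beta_t}$ and $Q=\frac{W\delta_{t+1}}{\abs{\mathcal{D}_t}}\frac{\Delta\beta_{t+1}}{\Delta\delta_{t+1}}$.

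First I would resolve the absolute value. Since $P$ is linear in $W_d$ while $Q$ and $R$ are independent of it, for $W_d$ large enough we sit in the regime $P\ge Q$, so $\abs{P-Q}=P-Q$ and the sufficient condition $\abs{P-Q}\ge R$ becomes the linear inequality $P\ge Q+R$. Solving this for $W_d$ yields the exact threshold $W_d\ge \frac{n\Delta\beta_t}{\Delta\delta_t}\big(Q+R\big)$, which after substitution reads $\frac{n\Delta\beta_t W}{\Delta\delta_t}\Big(\frac{\delta_{t+1}}{\abs{\mathcal{D}_t}}\frac{\Delta\beta_{t+1}}{\Delta\delta_{t+1}}+\frac{\Delta\beta_t\,\delta_t}{\abs{\mathcal{D}_{t-1}}\Delta\delta_t}\Big)$.

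The final step is to upper-bound this exact threshold by the cleaner expression in the statement, using the monotonicity hypotheses $\delta_{t-1}<\delta_t<\delta_{t+1}$ and $\abs{\mathcal{D}_{t-1}}<\abs{\mathcal{D}_t}$ (the latter from $\mathcal{D}_{t-1}\subset\mathcal{D}_t$). Replacing $\delta_t$ by $\delta_{t+1}$ in the second summand and $\abs{\mathcal{D}_t}$ by $\abs{\mathcal{D}_{t-1}}$ in the first summand can only enlarge the threshold; factoring out $\frac{\delta_{t+1}}{\abs{\mathcal{D}_{t-1}}}$ then collapses the parenthesis into $\frac{\Delta\beta_t}{\Delta\delta_t}+\frac{\Delta\beta_{t+1}}{\Delta\delta_{t+1}}$, producing exactly the stated bound $W_d\ge \frac{n\delta_{t+1}\Delta\beta_t}{\abs{\mathcal{D}_{t-1}}\Delta\delta_t}\big(\frac{\Delta\beta_t}{\Delta\delta_t}+\frac{\Delta\beta_{t+1}}{\Delta\delta_{t+1}}\big)W$. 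Because this clean bound over-estimates the exact threshold, any $W_d$ satisfying it also satisfies the exact one, which closes the argument.

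I expect the main obstacle to be the careful bookkeeping around the absolute value: one must confirm that the stated condition genuinely places us in the $P\ge Q$ branch (it does, since the clean bound exceeds the exact threshold, forcing $P\ge Q+R\ge Q$), and that every monotonicity substitution is applied in the direction that \emph{increases} the threshold, so the implication flows from the stated sufficient condition to the desired inequality and not the reverse. A secondary subtlety is verifying that $\Delta\delta_t,\Delta\delta_{t+1},\Delta\beta_t$ are strictly positive, which follows from $\delta_{t-1}<\delta_t<\delta_{t+1}$ together with the covering-rate ordering, so that the divisions and the sign analysis are all legitimate.
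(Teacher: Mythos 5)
Your proposal follows essentially the same route as the paper's proof: chain the upper bound on $\frac{\Delta W_{\mathcal{A}}}{\Delta\abs{\mathcal{D}}}$ from Proposition~\ref{2} against the lower bound on $\frac{\Delta^2 W_{\mathcal{A}}}{\Delta^2\abs{\mathcal{D}}}$ from Proposition~\ref{3}, solve the resulting inequality for $W_d$, and then relax the exact threshold to the stated one by replacing $\delta_t$ with $\delta_{t+1}$ and $\abs{\mathcal{D}_t}$ with $\abs{\mathcal{D}_{t-1}}$. Your treatment is in fact slightly more careful than the paper's at the two points it glosses over --- explicitly confirming that the stated condition lands in the $P\ge Q$ branch of the absolute value, and that the implication runs from the $W_d$ condition to the desired inequality rather than the reverse --- so the argument is correct.
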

\begin{proof}
    By proposition A.2 and A.3, we write out that \begin{equation}
        \frac{(\beta_{t}-\beta_{t-1}) \cdot W}{\abs{\mathcal{D}_{t-1}}(1-\frac{\delta_{t-1}}{\delta_{t}})}\le \left|\frac{W_{d}}{n}\frac{\Delta \delta_{t}}{\Delta \beta_{t}}-\frac{W\delta_{t+1}}{\abs{\mathcal{D}_{t}}}\frac{\Delta \beta_{t+1}}{\Delta \delta_{t+1}}\right|,
    \end{equation}which gives \begin{equation}
        W_{d}\ge \frac{n\Delta \beta_{t}W}{\Delta \delta_{t}}\left|\frac{\delta_{t}}{\abs{\mathcal{D}_{t-1}}}\frac{\Delta \beta_{t}}{\Delta \delta_{t}}+\frac{\delta_{t+1}}{\abs{\mathcal{D}_{t}}}\frac{\Delta \beta_{t+1}}{\Delta \delta_{t+1}}\right|.
    \end{equation}By Triangle Inequality, $W_{d}$ is required by \begin{equation}
        W_{d}\ge \frac{n\Delta \beta_{t}W}{\Delta \delta_{t}}(\frac{\delta_{t}}{\abs{\mathcal{D}_{t-1}}}\frac{\Delta \beta_{t}}{\Delta \delta_{t}}+\frac{\delta_{t+1}}{\abs{\mathcal{D}_{t}}}\frac{\Delta \beta_{t+1}}{\Delta \delta_{t+1}}).
    \end{equation}For simplicity, we can further get a tighter bound required by \begin{equation}
         W_{d}\ge \frac{n\delta_{t+1}\Delta \beta_{t}}{\abs{\mathcal{D}_{t-1}}\Delta \delta_{t}}(\frac{\Delta \beta_{t}}{\Delta \delta_{t}}+\frac{\Delta \beta_{t+1}}{\Delta \delta_{t+1}})W.
    \end{equation}
\end{proof}

\section{Reproducibility}
This section provides detailed descriptions of our datasets, experimental settings, and implementation details to ensure the reproducibility of our experiments. The full implementation, including code and configuration files, is available in our repository \href{https://github.com/LabRAI/ATOM}{https://github.com/LabRAI/ATOM}.

\subsection{Real-World Datasets.}
We conduct experiments using multiple widely adopted node classification datasets. The key statistics of these datasets are summarized in Table~\ref{tab:statistics}.

\begin{table}[t]
\small
\centering
\caption{Statistics of the adopted real-world graph datasets.}
\resizebox{0.49\textwidth}{!}{
\begin{tabular}{lcccc}
    \hline
                   & \textbf{\#Nodes}         & \textbf{\#Edges}       & \textbf{\#Attributes}       &\textbf{\#Classes}    \\
    \hline
    Cora                   &$2,708$          &$5,429$          &$1,433$          &$7$                   \\
    CiteSeer                 & $3,327$              & $4,723$              & $3,703$              & $6$                             \\
    PubMed           & $19,717$              & $88,648$              & $500$              & $3$                           \\
    Cornell       & $183$              & $298$              & $1703$              & $5$                              \\
     Wisconsin       & $251$              & $515$              & $1703$              & $5$                              \\
    \hline
\end{tabular}
}
\label{tab:statistics}
\end{table}

\subsection{Experimental Settings.}
For each real-world dataset in our experiment, we adopt Active-Learning-based MEAs to generate query sequences for the detection task. Here we set the hyperparameters in AL-based MEAs to be a wide list of values, where we present varying values of $\{1\%, 5\%, 10\%, 15\%\}$ percentage of the nodes in the graph as a prior knowledge and $\{35,70,105,140,200,300,400,500\}$ query budgets. We note that query budgets are always smaller than the nodes in the subgraph, and different sizes of the dataset will allow different numbers of query budgets. While the query sequence is generated, the fidelity of the extracted model is given to help label the sequence it is from. Generally, we label the sequence as an attacker if it corresponds to a fidelity larger than $0.65$ and a long query sequence, otherwise, we label it as a legitimate user if the query sequence is short or there is a fidelity smaller than $0.2$. We split all the query sequences from the same dataset with $70\%$ for training, $15\%$ for validating, and $15\%$ for testing. Only the sequence labels in the training set are visible for all models during training. For different datasets, the hyperparameters vary, but keep the same for the proposed model and its baselines. For all datasets in our experiment, we train a two-layer GCN by $200$ epochs as our GMLaaS system. And we adopt a learning rate of $0.01$ and a weight decay of $0.0005$ while training.

\subsection{Implementation of ATOM.}
ATOM is implemented based on Pytorch \cite{paszke2017automatic} with Adam optimizer \cite{kingma2014adam}. To ensure a fair and comprehensive evaluation, we conduct experiments using multiple random seeds and analyze model performance under different initialization conditions. Additionally, we perform an extensive hyperparameter search over ATOM’s parameter space, including the learning rate (lr), hidden state dimension, PPO clipping parameter, entropy coefficient, and lambda ($\lambda$). For consistency, the same number of hyperparameter searches is conducted for all baseline methods, and we report the best-performing configuration along with the standard deviation for each method. To accelerate training, we utilize four NVIDIA RTX 4090 GPUs for synchronous training, which significantly reduces the training time. However, it is important to note that different hardware configurations may lead to variations in reproducibility.

\subsection{Implementation of Baselines.}
\setlength{\parindent}{0pt}
\textbf{MLP. }We use a two-hidden-layer MLP for binary classification. Note that the MLP cannot process time series, we adopt the mean and the max features of the sequences instead.

\textbf{RNN-A, LSTM-A and Transformer-A. }We replace the sequential structure in ATOM with RNN, LSTM, and Transformer, named RNN-A, LSTM-A, and Transformer-A correspondingly, which thus allows us to classify the sequences. It should be mentioned that we adopt sine position coding and $4$ multi-head attention in the implementation of the Transformer-A.

\textbf{Crossformer, Autoformer, iTransformer, TimeNet, PatchTST and Informer. }We adopt a series of well-established baseline models in the field of time-series forecasting and classification. And we adopt their official open-source code for experiments\footnote{https://github.com/thuml/Time-Series-Library}.

\textbf{PRADA. }We adopt a statistical method as a baseline for comparison. We note that it is defined on static sequences, and we follow its official open-source code for experiments\footnote{https://github.com/SSGAalto/prada-protecting-against-dnn-model-stealing-attacks}.

\textbf{VarDetect. }We adopt an effective MEAs detection method based on Var as a baseline for comparison. We note that it is defined on static sequences and allows three types of outputs, specifically, VarDetect may output Alarm, Normal, and Uncertain. We follow its official open-source code for experiments\footnote{https://github.com/vardetect/vardetect}.

\subsection{Packages Required for Implementations.}
We perform the experiments mainly on a server with multiple Nvidia 4090 GPUs. We list the main packages with their versions in our repository.

\section{Supplementary Experiments}
\subsection{Evaluation of Detection Effectiveness}
In this subsection, we provide additional experimental results regarding the real-time detection effectiveness on different models and datasets. Specifically, we have shown the performance evolution over sequential query processing on Cora in Section~\ref{5.2}, and here we present more comprehensive results in Table~\ref{app4}, Table~\ref{app5}, and Table~\ref{app6}, with all other settings being consistent with the experiments presented in Section~\ref{5.2}. 

\begin{table}[h!]
\centering
\caption{Detection Performance with $25\%$ Query Sequences Across Different Datasets}
\label{app4}
\resizebox{\columnwidth}{!}{
\begin{tabular}{lccccc}
\hline
\textbf{Metrics} & \multicolumn{5}{c}{\textbf{F1 score}} \\
\cmidrule(lr){2-6} 
\textbf{Dataset} & \textbf{Wisconsin} & \textbf{Cornell} & \textbf{Cora} & \textbf{Citeseer} & \textbf{PubMed} \\
\hline
MLP & $16.67 \pm 8.17$ & $28.88 \pm 8.72$ & $13.73 \pm 8.69$ & $10.14 \pm 9.85$ & $12.44 \pm 8.41$ \\
RNN-A & $15.72 \pm 7.41$ & $18.17 \pm 7.74$ & $26.86 \pm 9.01$ & $29.54 \pm 8.78$ & $12.83 \pm 7.42$ \\
LSTM-A & $20.86 \pm 7.16$ & $23.64 \pm 7.69$ & $18.86 \pm 8.34$ & $21.26 \pm 9.11$ & $22.61 \pm 7.06$ \\
Transformer-A & $30.69 \pm 7.82$ & $22.34 \pm 8.63$ & $25.93 \pm 7.74$ & $23.39 \pm 9.14$ & $14.29 \pm 7.21$ \\
\hline
Crossformer & $13.72 \pm 6.11$ & $19.15 \pm 7.62$ & $20.37 \pm 7.32$ & $19.77 \pm 8.26$ & $17.64 \pm 9.34$ \\
Autoformer & $11.72 \pm 7.45$ & $12.34 \pm 6.78$ & $14.56 \pm 8.90$ & $16.78 \pm 9.01$ & $18.90 \pm 8.23$ \\
iTransformer & $13.45 \pm 8.79$ & $15.67 \pm 7.89$ & $17.89 \pm 7.34$ & $19.01 \pm 8.45$ & $21.23 \pm 9.56$ \\
TimesNet & $14.56 \pm 7.01$ & $16.78 \pm 8.90$ & $18.90 \pm 7.45$ & $21.23 \pm 9.56$ & $23.45 \pm 10.67$ \\
PatchTST & $15.67 \pm 6.87$ & $17.89 \pm 9.12$ & $20.12 \pm 6.23$ & $22.34 \pm 7.34$ & $24.56 \pm 9.45$ \\
Informer & $13.46 \pm 9.12$ & $15.58 \pm 8.65$ & $25.58 \pm 8.39$ & $20.55 \pm 6.52$ & $21.74 \pm 8.32$ \\
\hline
PRADA & $4.96 \pm 2.72$ & $6.25 \pm 2.94$ & $6.38 \pm 3.56$ & $5.52 \pm 2.23$ & $4.87 \pm 1.41$ \\
VarDetect & $17.23 \pm 10.31$ & $23.78 \pm 10.74$ & $25.91 \pm 7.90$ & $15.90 \pm 9.31$ & $29.65 \pm 10.34$ \\
\hline
ATOM & $\bm{34.91 \pm 6.42}$ & $\bm{28.12 \pm 6.83}$ & $\bm{27.14 \pm 7.79}$ & $\bm{34.59 \pm 6.41}$ & $\bm{25.47 \pm 3.49}$ \\
\hline

\hline
\end{tabular}
}
\end{table}

\begin{table}[h!]
\centering
\caption{Detection Performance with $50\%$ Query Sequences Across Different Datasets.}
\label{app5}
\resizebox{\columnwidth}{!}{
\begin{tabular}{lccccc}
\hline
\textbf{Metrics} & \multicolumn{5}{c}{\textbf{F1 score}} \\
\cmidrule(lr){2-6} 
\textbf{Dataset} & \textbf{Wisconsin} & \textbf{Cornell} & \textbf{Cora} & \textbf{Citeseer} & \textbf{PubMed} \\
\hline
MLP & $17.91 \pm 8.57$ & $34.44 \pm 9.36$ & $14.69 \pm 12.84$ & $23.73 \pm 10.08$ & $30.07 \pm 10.69$ \\
RNN-A & $29.71 \pm 7.72$ & $26.05 \pm 8.44$ & $33.74 \pm 18.16$ & $46.51 \pm 9.16$ & $29.89 \pm 9.64$ \\
LSTM-A & $34.42 \pm 8.26$ & $34.67 \pm 8.11$ & $34.16 \pm 19.43$ & $31.42 \pm 8.85$ & $31.42 \pm 9.12$ \\
Transformer-A & $54.58 \pm 6.21$ & $7.55 \pm 13.23$ & $36.07 \pm 10.14$ & $36.74 \pm 9.29$ & $32.26 \pm 8.21$ \\
\hline
Crossformer & $37.89 \pm 8.95$ & $35.77 \pm 8.76$ & $35.75 \pm 8.85$ & $34.61 \pm 7.22$ & $36.51 \pm 8.38$ \\
Autoformer & $34.44 \pm 9.82$ & $33.73 \pm 8.73$ & $23.71 \pm 8.55$ & $30.06 \pm 7.24$ & $29.72 \pm 8.74$ \\
iTransformer & $35.55 \pm 7.90$ & $38.25 \pm 8.41$ & $37.00 \pm 9.00$ & $36.15 \pm 7.12$ & $39.75 \pm 6.50$ \\
TimesNet & $32.10 \pm 8.05$ & $31.85 \pm 7.25$ & $31.78 \pm 9.30$ & $31.52 \pm 6.50$ & $34.35 \pm 8.75$ \\
PatchTST & $37.10 \pm8.10$ & $35.50 \pm 7.50$ & $36.00 \pm 8.70$ & $34.85 \pm 8.85$ & $35.40 \pm 6.42$ \\
Informer & $37.91 \pm 8.98$ & $35.78 \pm 8.78$ & $35.76 \pm 7.86$ & $34.62 \pm 6.23$ & $46.52 \pm 7.39$ \\
\hline
PRADA & $9.17 \pm 3.03$ & $7.45 \pm 3.52$ & $8.20 \pm 4.77$ & $8.08 \pm 4.01$ & $7.95 \pm 4.26$ \\
VarDetect & $39.12 \pm 11.44$ & $33.34 \pm 7.07$ & $30.00 \pm 11.76$ & $36.79 \pm 7.92$ & $49.19 \pm 11.91$ \\
\hline
ATOM & $\bm{49.77 \pm 6.65}$ & $\bm{49.18 \pm 5.39}$ & $\bm{45.31 \pm 6.94}$ & $\bm{40.25 \pm 4.48}$ & $\bm{35.49 \pm 7.87}$ \\
\hline
\end{tabular}
}
\end{table}

\begin{table}[h!]
\centering
\caption{Detection Performance with $75\%$ Query Sequences Across Different Datasets.}
\label{app6}
\resizebox{\columnwidth}{!}{
\begin{tabular}{lccccc}
\hline
\textbf{Metrics} & \multicolumn{5}{c}{\textbf{F1 score}} \\
\cmidrule(lr){2-6} 
\textbf{Dataset} & \textbf{Wisconsin} & \textbf{Cornell} & \textbf{Cora} & \textbf{Citeseer} & \textbf{PubMed} \\
\hline
MLP & $20.61 \pm 9.74$ & $45.37 \pm 9.44$ & $18.03 \pm 10.13$ & $33.26 \pm 13.41$ & $43.82 \pm 9.87$ \\
RNN-A & $56.30 \pm 9.21$ & $43.82 \pm 8.87$ & $ 45.40 \pm 8.18$ & $50.01 \pm 8.21$ & $43.50 \pm 8.54$ \\
LSTM-A & $44.78 \pm 9.84$ & $37.75 \pm 8.54$ & $48.15 \pm 9.06$ & $47.62 \pm 10.65$ & $42.55 \pm 8.23$ \\
Transformer-A & $48.51 \pm 8.86$ & $49.76 \pm 7.80$ & $47.84 \pm 7.78$ & $41.14 \pm 8.46$ & $59.62 \pm 7.97$ \\
\hline
Crossformer & $47.75\pm7.85$ & $55.21\pm8.98$ & $52.69\pm9.53$ & $46.43\pm9.34$ & $41.76\pm10.47$ \\
Autoformer & $42.18\pm7.25$ & $63.47\pm8.54$ & $50.12\pm8.01$ & $45.98\pm9.76$ & $61.42\pm8.89$ \\
iTransformer & $41.52\pm7.34$ & $46.89\pm9.15$ & $54.37\pm7.78$ & $47.12\pm8.98$ & $51.23\pm7.56$ \\
TimesNet & $41.96\pm8.12$ & $72.43\pm7.21$ & $55.87\pm8.87$ & $45.74\pm8.15$ & $57.18\pm8.63$ \\
PatchTST & $45.34\pm8.56$ & $58.12\pm8.45$ & $61.23\pm7.02$ & $48.34\pm7.34$ & $59.12\pm8.01$ \\
Informer & $44.72\pm7.98$ & $67.77\pm9.21$ & $65.71\pm8.12$ & $44.07\pm9.71$ & $59.76\pm7.19$ \\
\hline
PRADA & $13.45\pm1.56$ & $10.04\pm0.87$ & $10.15\pm1.24$ & $10.16\pm2.56$ & $14.56\pm1.12$ \\
VarDetect & $49.92\pm9.13$ & $48.37\pm9.72$ & $53.13\pm7.07$ & $47.17\pm7.43$ & $50.79\pm6.94$ \\
\hline
ATOM & $\bm{74.13\pm4.46}$ & $\bm{69.32\pm5.39}$ & $\bm{77.52\pm4.10}$ & $\bm{60.48\pm5.65}$ & $\bm{71.91\pm4.81}$ \\
\hline
\end{tabular}
}
\end{table}

\subsection{Evaluation of Ablation Study}
In this subsection, we provide additional experimental results for the ablation study. Specifically, we have presented F1 scores from the ablated model on Cora, PubMed, and CiteSeer in Table~\ref{table_ablation}. Here we show the other two datasets named Wisconsin and Cornell in Table~\ref{tab:ablation} to show the generalization of our experiments.
\begin{table}[t]
\small
\centering
\caption{F1 scores from the ablated model. The best results are highlighted in bold.}
\resizebox{0.49\textwidth}{!}{
\begin{tabular}{lccccc}
    \hline
    \textbf{Model}               & \textbf{Cornell}         & \textbf{Cora}       & \textbf{PubMed}       &\textbf{Wisconsin}  &\textbf{Citeseer}   \\
    \hline
    ATOM                   &\bm{$89.66$}          &\bm{$86.88$}          &\bm{$83.24$}          &\bm{$81.48$}          &\bm{$78.89$}         \\
    Standard GRU                 & $85.19$              & $80.64$              & $75.47$              & $73.68$              & $71.97$                 \\
    Simple Embeddings            & $61.90$              & $67.92$              & $54.83$              & $73.41$              & $60.87$              \\
    No Mapping Matrix       & $61.31$              & $81.54$              & $79.94$              & $75.93$              & $74.71$                 \\
    \hline
\end{tabular}
}
\label{tab:ablation}
\end{table}

\section{Probabilistic Interpretation of ATOM}
In this section, we provide a probabilistic interpretation of our model. In particular, we consider the query sequences generated by users and show how they can be organized and analyzed as a stochastic process.

\begin{definition}[Query Lists]
    For each user $u_{i}$, suppose the observed query sequence is $\mathcal{Q}_{i}=\{q_{i,1},q_{i,2},\cdots,q_{i,T_i}\}$. From each sequence $\mathcal{Q}_{i}$, we construct a corresponding query list $l_i$ by sequentially connecting consecutive queries. That is, for each $p\in\{2,3,\cdots,T_i\}$ we connect $q_{i,p-1}$ to $q_{i,p}$ with an edge whose weight $w_{i,p-1}$ is defined as the length of the shortest path from $q_{i,p-1}$ to $q_{i,p}$ on the graph $\mathcal{G}$. We denote the number of nodes in $l_i$ by its length $\abs{l_i}$.
\end{definition}

\begin{proposition}
    Consider the graph $\mathcal{G}=(\mathcal{V},\mathcal{E})$ used for training in GMLaaS under the transductive setting. Assume that, due to the diversity of prior knowledge among normal users, every node in $\mathcal{G}$ is eventually visited by some user. Then there exists a collection of query lists generated by normal users whose union covers $\mathcal{V}$ and which are pairwise disjoint (i.e., no two lists share any node). In fact, if we denote by $l_{\text{min}}$ a query list having the minimum number of nodes, then by the pigeonhole principle the maximum number of pairwise disjoint query lists is bounded by $J=\lceil \frac{\abs{\mathcal{G}}}{l_{\text{min}}} \rceil$.
\end{proposition}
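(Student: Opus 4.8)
The plan is to establish the existence of a covering collection of query lists first, then bound the size of the maximal pairwise-disjoint subfamily via the pigeonhole principle. First I would invoke the standing assumption that every node of $\mathcal{G}$ is eventually visited by some normal user. By Definition~C.1, each user $u_i$ induces a query list $l_i$ whose node set is exactly the set of queried nodes $\{q_{i,1},\dots,q_{i,T_i}\}$. Taking the union over all normal users, the assumption guarantees that $\bigcup_i \{q_{i,1},\dots,q_{i,T_i}\} = \mathcal{V}$, so the collection of all induced query lists covers $\mathcal{V}$. This establishes the covering claim directly from the hypothesis.

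Next I would extract a pairwise-disjoint subfamily. The natural approach is a greedy selection: order the query lists arbitrarily, and iteratively add a list to the collection only if its node set is disjoint from all previously selected lists; when a new list overlaps an existing one, delete the shared nodes from the incoming list (or discard it) so that disjointness is preserved while coverage of $\mathcal{V}$ is maintained by the union. This yields a family $\{l^{(1)},\dots,l^{(J)}\}$ that is pairwise disjoint and whose union still covers $\mathcal{V}$, since every node removed from one list remains covered by the list that retained it.

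Finally I would apply the pigeonhole principle to bound $J$. Let $l_{\text{min}}$ denote the minimum number of nodes among the selected query lists. Since the lists are pairwise disjoint and their union is contained in $\mathcal{V}$, the total node count satisfies
\begin{equation}
\sum_{j=1}^{J} \abs{l^{(j)}} \le \abs{\mathcal{G}},
\end{equation}
and because each $\abs{l^{(j)}} \ge l_{\text{min}}$, we obtain $J \cdot l_{\text{min}} \le \abs{\mathcal{G}}$, hence
\begin{equation}
J \le \left\lceil \frac{\abs{\mathcal{G}}}{l_{\text{min}}} \right\rceil.
\end{equation}
This completes the bound.

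The main obstacle I anticipate is not the pigeonhole count, which is elementary, but the precise formalization of what a ``pairwise-disjoint query list'' means once overlaps are resolved: the greedy trimming step must preserve both disjointness and the covering property simultaneously, and I would need to argue carefully that trimming a list does not disconnect it in a way that violates Definition~C.1 (where consecutive queries are joined by weighted edges). The cleanest resolution is to treat each query list purely as a node set for the counting argument, decoupling the combinatorial bound from the path structure, so that disjointness refers only to vertex-disjointness and the ceiling bound follows without topological complications.
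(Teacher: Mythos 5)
Your proposal is correct and follows essentially the same route as the paper's proof: coverage comes directly from the hypothesis that every node is visited, and the bound $J \le \lceil \abs{\mathcal{G}}/l_{\text{min}} \rceil$ follows by pigeonhole since pairwise-disjoint lists each occupy at least $\abs{l_{\text{min}}}$ distinct nodes of $\mathcal{V}$. Your greedy-trimming step and the remark about treating lists purely as node sets are a more careful elaboration of the disjoint-family construction that the paper simply asserts, but they do not constitute a different argument.
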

\begin{proof}
Since all nodes in $\mathcal{G}$ are visited by some normal user, we can extract query lists so that every node appears in at least one list. Choosing one list $l_{\text{min}}$ that is shortest (i.e., has the minimum number of nodes), note that any collection of pairwise disjoint query lists must assign at least $\abs{l_{\text{min}}}$ distinct nodes to each list. Hence, by the pigeonhole principle the number of such disjoint lists is at most $\lceil \frac{\abs{\mathcal{G}}}{l_{\text{min}}} \rceil$.
\end{proof}

Let us now denote this upper bound by $J=\lceil \frac{\abs{\mathcal{G}}}{l_{\text{min}}} \rceil$. We select a collection of$J$ query lists $\{l_i\}_{i=1}^J$ from the normal users. To facilitate further analysis, we pad each query list so that every list has the same length. Specifically, let $k=\max\{\abs{l_{1},\abs{l_2}\cdots,\abs{l_{J}}}\}$. For any query list $l_{i}=\{q_{i,1}, q_{i,2}, \cdots, q_{i,T_{i}}\}$ with $\abs{l_{i}}<k$, we extend it by replicating its last query $q_{i,T_{i}}$ for positions $T_{i+1},T_{i+2},\cdots,k$ and assign an edge weight of $0$ to each newly introduced edge. This padding ensures that each list $l_i$ is represented as a sequence of exactly $k$ queries. Observe that for the $(J+1)$th query list, every query it contains already appears in one of the first $J$ lists. Thus, we can regard the generation of query sequences as a stochastic process over the collection $\{l_i\}_{i=1}^J$. We now introduce several definitions that formalize this process.

\begin{definition}[List Distance]
    For any two padded query lists \begin{align}
    l_{i}=\{q_{i,1},q_{i,2},\cdots,q_{i,k}\}\quad l_{j}=\{q_{j,1},q_{j,2}\cdots,q_{j,k}\},
    \end{align}the distance between $l_i$ and $l_j$ is defined as\begin{align}
    d(i,j)=\sum_{s=1}^k|q_{i,s}\rightarrow q_{j,s}|,
    \end{align}where $|q_{i,s}\rightarrow q_{j,s}|$ denotes the length of the shortest path on $\mathcal{G}$ between the $s$th query of $l_{i}$ and the $s$th query of $l_j$.
\end{definition}

\begin{definition}[List Transition Probability]
    Given the distance $d(i,j)$ and a sensitivity parameter $\lambda_{s}>0$, the probability of transitioning from query list $l_i$ to query list $l_j$ is defined according to the Boltzmann distribution as \begin{align}
    p_{ij}^{(\text{state})}=\frac{e^{-\lambda_sd(i,j)}}{\sum_{r=1}^Je^{-\lambda_sd(i,r)}}.
    \end{align}
\end{definition}

\begin{definition}[Query Distance]
    Within a given query list $l_i$ (with associated edge weights $\{w_{i,1},w_{i,2},\cdots,w_{i,k-1}\}$), the distance between queries at positions $s$ and $q$ is defined as \begin{align}
    d_{n}(s,q)=\sum_{r=\min\{s,q\}}^{\max\{s,q\}-1}w_{i,r},
    \end{align}with the convention that $d_n(s,s)=0$.
\end{definition}

\begin{definition}[Query Transition Probability]
    Let $\lambda_n$ be a local sensitivity parameter. Then, for a given query list $l_i$, the probability of transitioning from the query at position $s$ to the query at position $q$ is given by \begin{align}
    p_{sq}^{\text{(query)}}=\frac{e^{-\lambda_nd_n(s,q)}}{\sum_{t=1}^ke^{-\lambda_nd(s,t)}}.
    \end{align}
\end{definition}

Before proceeding, we relabel the query lists as follows. Suppose that the first query from the $(J+1)$th list is observed in one of the initial $J$ lists; then we designate that list as $l_1$. The remaining lists are then relabeled as $l_2,l_3,\cdots,l_J$ in order according to their proximity (as measured by $d(i,1)$) to $l_1$.

Next, we define a composite state as an ordered pair $(i,q)$, where $i\in\{1,2,\cdots,J\}$ indicates the query list, and $q\in\{1,2,\cdots,k\}$ indicates the position within that list. We assume that a new query behavior always starts from a fixed initial composite state: \begin{align}
\pi^{(0)}(i,q)=\delta_{i1}\delta_{q1},
\end{align}where $\delta$ is the Kronecker delta.

Then, we define the one-step transition probability from a composite state $(i,q)$ to another composite state $(j,s)$ as the product of the list-level and query-level transition probabilities:\begin{align}
P_{(i,q)\rightarrow(j,s)}=p_{ij}^{(\text{state})}\cdot p_{sq}^{\text{(query)}}.
\end{align}We note that under this formulation the probability of reaching any given composite state after a sequence of transitions reflects the likelihood that the observed query behavior is generated by a normal user. In particular, by assigning higher transition probabilities to paths corresponding to smaller distances, the model implicitly favors query sequences that are more "normal."

\begin{corollary}
    With the initial composite state fixed as $(i_0,q_0)=(1,1)$, consider the query behavior as a stochastic process. Then the probability of reaching a composite state $(i_K,q_K)$ after $K$ transitions is given by\begin{align}
    \pi^{(K)}(i_K,q_K)=\sum_{(i_1,q_1),\cdots,(i_{K-1},q_{K-1})}\prod_{n=0}^{K-1}P_{(i_n,q_n)\rightarrow(i_{n+1},q_{n+1})},
    \end{align}where the sum is taken over all possible sequences of intermediate composite states.
\end{corollary}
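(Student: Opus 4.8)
The plan is to recognize that the composite-state process defined by the one-step rule $P_{(i,q)\rightarrow(j,s)}=p_{ij}^{(\text{state})}\cdot p_{sq}^{(\text{query})}$ is a time-homogeneous Markov chain on the finite product space $\{1,\dots,J\}\times\{1,\dots,k\}$, and then to derive the claimed path-sum formula by induction on the number of transitions $K$. The first step is to confirm the Markov property: since $P_{(i,q)\rightarrow(j,s)}$ depends only on the current composite state $(i,q)$ and the successor $(j,s)$ (and not on any earlier history), the distribution $\pi^{(K)}$ is well-defined as the $K$-fold iterate of this transition kernel applied to the fixed initial law $\pi^{(0)}(i,q)=\delta_{i1}\delta_{q1}$. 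I would also note that $P$ is a genuine stochastic matrix, since $\{p_{ij}^{(\text{state})}\}_j$ and $\{p_{sq}^{(\text{query})}\}_s$ are each normalized Boltzmann distributions, so their product over the decoupled coordinates sums to one.

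For the base case $K=1$, I would write the one-step marginalization $\pi^{(1)}(i_1,q_1)=\sum_{(i_0,q_0)}\pi^{(0)}(i_0,q_0)\,P_{(i_0,q_0)\rightarrow(i_1,q_1)}$ and observe that, because $\pi^{(0)}$ is concentrated at $(1,1)$, the sum collapses to $\pi^{(1)}(i_1,q_1)=P_{(1,1)\rightarrow(i_1,q_1)}$. This matches the claimed formula with an empty collection of intermediate composite states, where the product over $n=0$ contributes the single factor $P_{(i_0,q_0)\rightarrow(i_1,q_1)}$ evaluated at $(i_0,q_0)=(1,1)$.

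For the inductive step, I would assume the formula holds at stage $K-1$ and apply the Chapman--Kolmogorov decomposition one more step:
\begin{align}
\pi^{(K)}(i_K,q_K)=\sum_{(i_{K-1},q_{K-1})}\pi^{(K-1)}(i_{K-1},q_{K-1})\,P_{(i_{K-1},q_{K-1})\rightarrow(i_K,q_K)}.
\end{align}
Substituting the inductive hypothesis for $\pi^{(K-1)}(i_{K-1},q_{K-1})$ expands the first factor into a sum over intermediate states $(i_1,q_1),\dots,(i_{K-2},q_{K-2})$ of $\prod_{n=0}^{K-2}P_{(i_n,q_n)\rightarrow(i_{n+1},q_{n+1})}$; the outer sum over $(i_{K-1},q_{K-1})$ then merges with these, and the trailing factor $P_{(i_{K-1},q_{K-1})\rightarrow(i_K,q_K)}$ extends the telescoping product to $\prod_{n=0}^{K-1}P_{(i_n,q_n)\rightarrow(i_{n+1},q_{n+1})}$, summed over all intermediate composite states, which is exactly the desired expression.

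The main point requiring care is bookkeeping rather than any genuine difficulty: I must verify that the index on the outer one-step sum, $(i_{K-1},q_{K-1})$, aligns seamlessly with the summation indices supplied by the inductive hypothesis so that the nested sums combine into a single sum over all paths, and that the product indexing increments correctly from the $K-1$ factors up to $K$ factors. There is no deep obstacle here; the result is the standard representation of finite-horizon path probabilities for a Markov chain, and the only novelty is the product structure of the kernel, which factors cleanly because the list-level and position-level coordinates evolve through independent transition probabilities.
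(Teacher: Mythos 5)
Your proof is correct and follows the standard Chapman--Kolmogorov induction; the paper states this corollary with no proof at all, treating the path-sum formula as immediate from the definition of the composite kernel, so your argument simply supplies the routine details in the same spirit. One caveat on your non-essential side remark about stochasticity: the paper normalizes $p_{sq}^{(\text{query})}$ over its second index $q$ while the composite transition $P_{(i,q)\rightarrow(j,s)}$ feeds the \emph{target} position into the first slot of $p_{sq}^{(\text{query})}$, so as literally written the kernel need not sum to one over $(j,s)$ --- this is an index inconsistency in the paper's definitions rather than a flaw in your induction, and the path-sum identity you prove holds for any nonnegative kernel regardless.
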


By combining the list-level transition process with the local (within-list) query transition process, our composite model assigns a well-defined probability to the event that a query sequence (starting from a fixed initial query, e.g., the first query of $l_1$) evolves through a series of transitions to reach a specified composite state $(i,q)$. This probabilistic framework not only captures the behavior of normal users but also underpins the ATOM mechanism, thereby providing strong interpretability to our attack detection strategy.
\end{document}